  \newcommand{\yestag}{\addtocounter{equation}{1}\tag{\theequation}} %
\crefname{algocf}{algorithm}{algorithms}
\Crefname{algocf}{Algorithm}{Algorithms}
\colorlet{Light-CadetBlue}{CadetBlue!50!white}
\definecolor{AnejBlue}{RGB}{0,120,148}	%
\theoremstyle{plain}  %
\newtheorem{theorem}{Theorem}[section]       %
\newtheorem{lemma}[theorem]{Lemma}           %
\newtheorem{proposition}[theorem]{Proposition} %
\theoremstyle{definition}  %
\newtheorem{definition}[theorem]{Definition} %
\theoremstyle{remark}  %
\colorlet{IngredientsColor}{Mahogany}
\colorlet{HeadColor}{SeaGreen}
\newcommand{\macro}[1]{{#1}}
\newcommand{\bos}{{\macro{\textsc{bos}}}}
\newcommand{\eos}{{\macro{\textsc{eos}}}}
\newcommand{\cls}{{\macro{\textsc{cls}}}}
\newcommand{\ReLU}{\textnormal{ReLU}}
\newcommand{\GELU}{\textnormal{GELU}}
\newcommand{\sigmoid}{\textnormal{sigmoid}}
\newcommand{\erf}{\textnormal{erf}}
\newcommand{\dhid}{{d_{\textnormal{hid}}}}
\newcommand{\dkey}{{d_{\textnormal{key}}}}
\newcommand{\qproj}{\mat{W}^{(\textnormal{Q})}}
\newcommand{\kproj}{\mat{W}^{(\textnormal{K})}}
\newcommand{\vproj}{\mat{W}^{(\textnormal{V})}}
\newcommand{\nproj}{\mat{W}^{(\textnormal{N})}} %
\newcommand{\oproj}{\mat{W}^{(\textnormal{O})}}
\newcommand{\ffwone}{\mat{W}_1}
\newcommand{\ffbone}{\vec{b}_1}
\newcommand{\ffwtwo}{\mat{W}_2}
\newcommand{\ffbtwo}{\vec{b}_2}
\newcommand{\qvec}[1]{\vec{q}_{#1}}
\newcommand{\kvec}[1]{\vec{k}_{#1}}
\newcommand{\vvec}[1]{\vec{v}_{#1}}
\newcommand{\tlio}{\macro{\vec{z}}} %
\newcommand{\sain}{\macro{\vec{z}}} %
\newcommand{\saout}{\macro{\vec{c}}} %
\newcommand{\tlhid}{\macro{\bar{\vec{c}}}} %
\newcommand{\ffin}{\macro{\vec{x}}} %
\newcommand{\ffhid}{\macro{\vec{h}}}
\newcommand{\ffout}{\macro{\vec{y}}} %
\newcommand{\UHAT}{\textsf{UHAT}}
\newcommand{\AHAT}{\textsf{AHAT}}
\newcommand{\SMAT}{\textsf{SMAT}}
\newcommand{\dyckk}[1]{\textsc{Dyck}\text{-}\ensuremath{#1}}
\newcommand{\dyckkd}[2]{\textsc{Dyck}\text{-}\ensuremath{#1}\text{-}\ensuremath{#2}}
\def\dyckO{\texttt{(}}
\def\dyckC{\texttt{)}}
\newcommand{\R}{\mathbb{R}}
\newcommand{\mat}[1]{\mathbf{#1}}
\renewcommand{\vec}[1]{\mathbf{#1}}
\newcommand{\str}[1]{#1}
\newcommand{\elt}[1]{_{#1}} %
\newcommand{\lpto}{\overset{\textnormal{lp}}{\to}}
\newcommand{\cif}[3]{\mathsf{if}(#1,#2,#3)}
\newcommand{\ind}[1]{\mathbb{I}\left[#1\right]}
\DeclareMathOperator*{\argmax}{{\macro{ argmax}}}
\DeclareMathOperator*{\softmax}{softmax}
\newcommand{\lhardmax}{\mathop{\textnormal{lhardmax}}}
\newcommand{\rhardmax}{\mathop{\textnormal{rhardmax}}}
\newcommand{\hardmax}{\mathop{\textnormal{hardmax}}}
\newcommand{\avghardmax}{\mathop{\textnormal{ahardmax}}}
\DeclarePairedDelimiter\parens{(}{)}
\DeclarePairedDelimiter\abs{\lvert}{\rvert}
\DeclarePairedDelimiter\norm{\lVert}{\rVert}
\newcommand\mbb[1]{\mathbb{#1}}
\newcommand\mbf[1]{\mathbf{#1}}
\newcommand\mrm[1]{\mathrm{#1}}
\newcommand\msf[1]{\mathsf{#1}}
\newcommand*\iftodonotes{\if@todonotes@disabled\expandafter\@secondoftwo\else\expandafter\@firstoftwo\fi}  %
\definecolor{sbblue}{HTML}{024575}
\newcommand{\AY}[1]{({\textcolor{ForestGreen}{\textbf{AY: #1}}})}
\newcommand{\DC}[1]{({\textcolor{Dandelion}{\textbf{DC: #1}}})}
\newcommand{\CW}[1]{({\textcolor{PineGreen}{\textbf{CW: #1}}})}
\newcommand{\pos}{\phantom{-}} %
\definecolor{ETHBlue}{RGB}{33,92,175}	%
\definecolor{ETHGreen}{RGB}{98,115,19}		%
\definecolor{ETHPurpleDark}{RGB}{140,10,89}	%
\definecolor{ETHPurple}{RGB}{163,7,116}	%
\definecolor{ETHGray}{RGB}{111,111,111}	%
\definecolor{ETHRed}{RGB}{183,53,45}	%
\definecolor{ETHPetrol}{RGB}{0,120,148}	%
\definecolor{ETHBronze}{RGB}{142,103,19}	%
\definecolor{TextBlack}{RGB}{51,51,51}
\definecolor{BackgroundWhite}{RGB}{255,255,255}
\definecolor{AccentBlue}{RGB}{0,122,204}
\definecolor{LightBlue}{RGB}{173,216,230}
\definecolor{DarkBlue}{RGB}{0,51,102}
\definecolor{AccentGreen}{RGB}{70,160,73}
\definecolor{LightGreen}{RGB}{144,238,144}
\definecolor{DarkGreen}{RGB}{0,100,0}
\definecolor{AccentRed}{RGB}{255,0,0}
\definecolor{LightRed}{RGB}{255,99,71}
\definecolor{DarkRed}{RGB}{139,0,0}
\definecolor{AccentOrange}{RGB}{255,165,0}
\definecolor{LightOrange}{RGB}{255,204,153}
\definecolor{DarkOrange}{RGB}{255,140,0}
\definecolor{NeutralLightGray}{RGB}{204,204,204}
\definecolor{NeutralMediumGray}{RGB}{102,102,102}
\definecolor{NoteYellow}{RGB}{255,255,0}
\definecolor{DiversePurple}{RGB}{128,0,128}
\definecolor{DiverseTeal}{RGB}{0,128,128}
\definecolor{DiverseOlive}{RGB}{128,128,0}
\definecolor{DiverseCyan}{RGB}{0,128,192}
\definecolor{DiverseMagenta}{RGB}{192,0,128}
\newcommand{\alphabet}{{\macro{\Sigma}}}
\title{The Transformer Cookbook}
\author{\name Andy Yang\thanks{Corresponding author. \raisebox{-0.6ex}{\includegraphics[height=2.8ex]{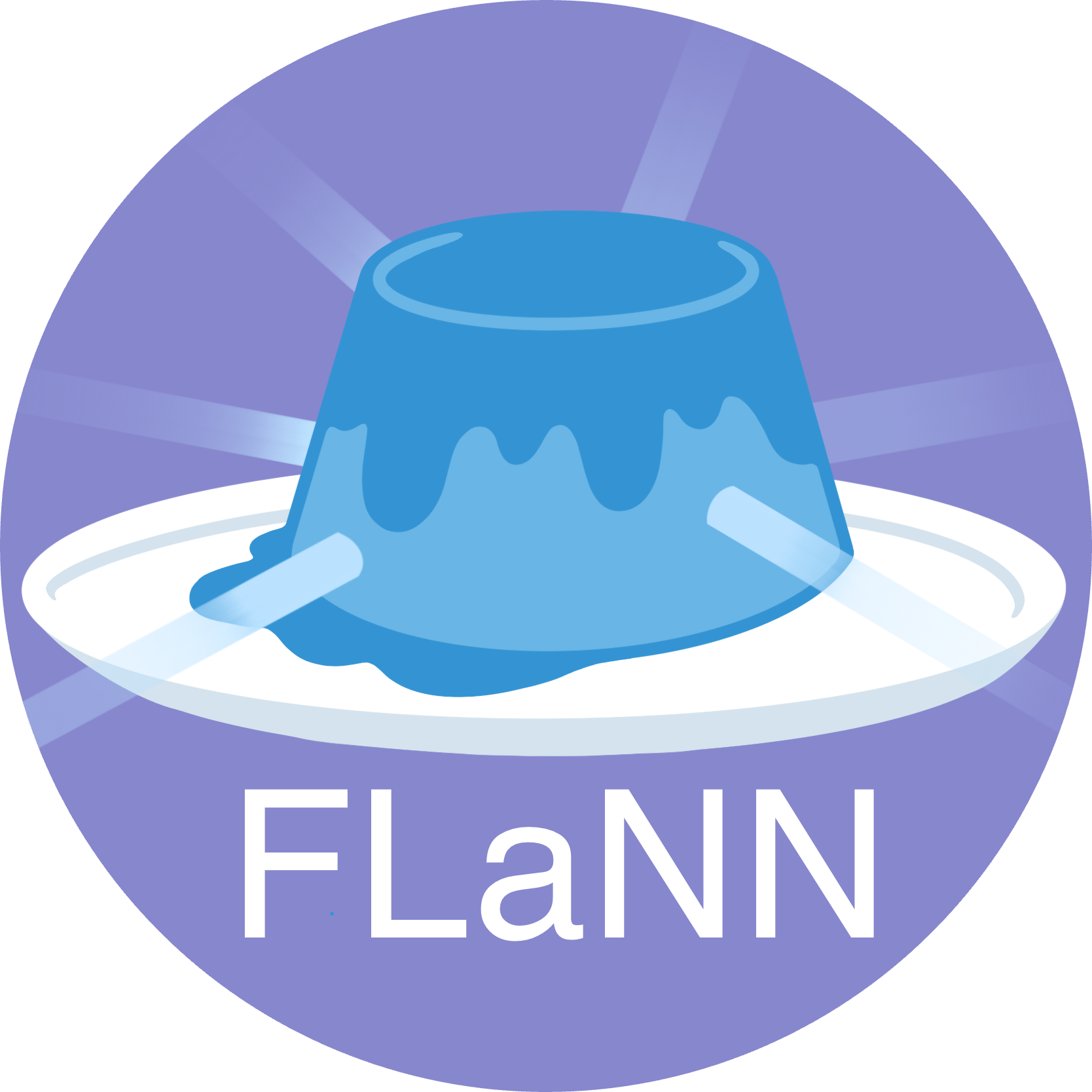}} 
Formal Languages and Neural Networks Seminars:
\url{https://flannseminars.github.io/}} \email ayang4@nd.edu \\
      \addr University of Notre Dame
      \AND Christopher Watson \email ccwatson@seas.upenn.edu \\
      \addr University of Pennsylvania
      \AND Anton Xue \email anton.xue@austin.utexas.edu \\
      \addr University of Texas at Austin
      \AND
      Satwik Bhattamishra \email satwik.bmishra@cs.ox.ac.uk \\
      \addr University of Oxford
      \AND
      \name Jose Llarena \email jose.llarena@gmail.com \\
      \addr Independent Researcher
      \AND
      \name William Merrill \email willm@allenai.org \\
      \addr Allen Institute for Artificial Intelligence
      \AND
      \name Emile Dos Santos Ferreira \email emileferreira@sun.ac.za \\
      \addr Stellenbosch University
      \AND
      \name Anej Svete \email asvete@ethz.ch \\
      \addr ETH Zürich
      \AND
      \name David Chiang \email dchiang@nd.edu \\
      \addr University of Notre Dame}
\begin{document}

\maketitle

\begin{abstract}

We present the transformer cookbook: a collection of techniques for directly encoding algorithms into a transformer's parameters.
This work addresses the steep learning curve of such endeavors, a problem exacerbated by a fragmented literature where key results are scattered across numerous papers.
In particular, we synthesize this disparate body of findings into a curated set of recipes that demonstrate how to implement everything from basic arithmetic in feed-forward layers to complex data routing via self-attention.
Our \textit{mise en place} of formulations is for both newcomers seeking an accessible entry point and experts in need of a systematic reference.
This unified presentation of transformer constructions provides a foundation for future work spanning theoretical research in computational complexity to empirical investigations in architecture design and interpretability.

\end{abstract}

\section{Introduction}

Transformers are the key ingredient of large language models \citep{vaswani-etal-2017-attention}.
While they are typically trained through data, there is also much interest in explicitly ``programming'' algorithms into their parameters.
By linking algorithmic procedures to model constructions, we obtain a more transparent view of transformers: the problems they can solve, the mechanisms they might implement, and their fundamental limitations.
In practice, theoretical results proven through transformer constructions are commonly used as motivation for tasks ranging from model training to mechanistic interpretability.

However, transformer constructions are difficult to study and apply.
The literature is highly fragmented, with key results scattered across numerous publications, each with its own unique notation and architectural assumptions.
While notable surveys, such as that of \citet{strobl-etal-2024-survey}, chart and map out key results built from these constructions, they rarely emphasize and delve into the specific technical details.
In fact, there has been no unified presentation of the techniques used in these constructions, which forces newcomers to piece together the requisite knowledge from disparate sources.

To address this gap, we curate and systematize the common constructions of transformer-encoded algorithms, creating a unified reference for both aspiring and seasoned \textit{transformists} that makes these methods explicit, precise, and accessible.
This synthesis establishes a common ground for both theory and practice: theoretically, it abstracts a set of core computational principles from the literature; empirically, it offers a library of idealized circuits to guide the design of new architectures and the analysis of existing ones.
More broadly, we believe this unified perspective is essential for building safe and reliable AI systems.

To achieve these goals, this cookbook is structured as follows:

\begin{itemize}
    \item
    \textbf{Preliminaries} (\cref{sec:preliminaries}).
    This section gives an overview of the mathematical background and the key components of a transformer.
    Even if the reader is already familiar with this topic, we nevertheless encourage giving this section a read, since nearly all constructions in the book will follow the notational conventions introduced here.

    \item
    \textbf{Basic Ingredients} (\cref{sec:basic_ingredients}).
    Before diving into constructions, we next present some lemmas that will be of use.
    Additionally, we discuss some key concepts that must be considered in any rigorous construction, such as the issue of uniformity and representing common data types.
    
    \item
    \textbf{Feedforward Layers} (\cref{sec:feedforward}).
    The feedforward layers of a transformer offer opportunities for rich computation.
    Here, we discuss how to construct arithmetic computations, logic circuits, and other tricks.
    A summary of the key constructions is shown in~\cref{tab:ffnn_cheatsheet}.

    \item
    \textbf{Self-Attention Layers} (\cref{sec:attention}).
    Innovations in the self-attention layer are what set the transformer apart from the architectures that came before it.
    We show how to construct useful aggregation and selection operations here.
    A summary of the constructions is provided in~\cref{tab:sa_cheatsheet}.

    \item
    \textbf{Layer Normalization} (\cref{sec:layernorm}).
    This is an important architectural component that enabled early practitioners to train large-scale transformers with numerical stability.
    We show how to leverage this component to perform operations, such as the amplification of signals.

    \item
    \textbf{Rounding} (\cref{sec:rounding}).
    Whereas it is often convenient for humans to reason in discrete values (e.g., tokens), transformers fundamentally operate on continuous-valued inputs.
    We discuss various rounding tricks here that allow for easy conversion from the continuous to the discrete.
    
    \item
    \textbf{Assembly} (\cref{sec:assembly}).
    Having introduced many constructions, we now give useful lemmas on how to connect them.
    The primary operations involve putting constructions in sequence (``serial composition'') and in parallel (``parallel composition'').

    \item
    \textbf{Examples} (\cref{sec:dyck1}).
    To round out our cookbook, we provide instructive examples on how classical constructions from the literature may be achieved with our recipes.
    In particular, we give examples with induction heads and the Dyck languages.

\end{itemize}

\section{Preliminaries}
\label{sec:preliminaries}

\subsection{Notation}
\label{sec:notation}

We write $[n]$ for the set $\{1, \ldots, n\}$. For any true/false statement $b$, we let $\ind{b} = 1$ if $b$ is true and $0$ if $b$ is false. If $X$ is a set, we write $X^*$ for the set of sequences over $X$, and $X^+$ for the set of non-empty sequences over $X$. Let $\alphabet$ be a finite alphabet. We write strings in $\alphabet^*$ as $\str{w} = w_1 \cdots w_n$.
We write $\vec{0}$ for the zero vector or matrix, $\mat{I}$ for the identity matrix, and $\vec{1}$ for a vector or matrix whose entries are all $1$.
For any function $f \colon \R \to \R$ and any vector $\vec{x} = [x_i]_{i=1}^d \in \R^d$, we extend $f$ to $\vec{x}$ coordinate-wise:
  \begin{equation*}
    f(\mathbf{x}) = f\mleft(\begin{bmatrix} x_1 \\ x_2 \\ \vdots \\ x_d \end{bmatrix}\mright) =  \begin{bmatrix} f(x_1) \\ f(x_2) \\ \vdots \\ f(x_d) \end{bmatrix}.
  \end{equation*}

We write $f \colon X^+ \lpto Y^+$ to state that $f$ is a function from $X^+$ to $Y^+$, and it is length-preserving, that is, it maps every sequence to a sequence of the same length.
Slightly unusually (following \citet{strobl-etal-2024-survey}), we will often work with sequences of vectors in $(\R^d)^+$, which we write as $(\vec{x}_1, \ldots, \vec{x}_n)$. Two sequences of the same length can be added position-wise: $(\vec{x}_1, \ldots, \vec{x}_n) + (\vec{y}_1, \ldots, \vec{y}_n) = (\vec{x}_1+\vec{y}_1, \ldots, \vec{x}_n+\vec{y}_n)$.

To avoid excessive subscripting and superscripting in complex networks, we adopt a ``dot notation'' which is nonstandard in mathematics but hopefully familiar from many programming languages. If $f$ is a function that depends on some parameters, we give the parameters names like $f.\mathsf{a}$ and $f.\mathsf{b}$. If $g$ is another function with parameters $g.\mathsf{a}$ and $g.\mathsf{b}$, there is no implied relationship between $f.\mathsf{a}$ and $g.\mathsf{a}$.

\begin{table}[p]
\centering\small
\renewcommand\arraystretch{1.2}
\begin{tabularx}{\textwidth}{@{}>{\bfseries}l c X c c@{}}
\toprule
Function & Type & Description & Exact? & Reference \\ \midrule
Identity              & $\R^d\to\R^d$             & returns the input unchanged                                              & Yes & \cref{sec:ffnn_identity} \\
Min / Max             & $\R^2\to\R$               & $\min(x,y)$ or $\max(x,y)$                                               & Yes & \cref{sec:ffnn_minmax} \\
Add / Subtract        & $\R^2\to\R$               & $x+y$ (or $x-y$)                                                         & Yes & \cref{sec:ffnn_addition} \\
Multiply by $c$       & $\R\to\R$                 & scales input: $x\mapsto c\,x$                                            & Yes & \cref{sec:routing} \\
Multiply ($xy$)       & $\R^2\to\R$               & product $xy$ via 2nd-order Taylor expansion of GELU                                & No  & \cref{sec:ffnn_multiplication} \\
Comparators           & $\R\to\{0,1\}$            & 
                                                        $\textstyle\mathbb{I}[x>0]$, $\mathbb{I}[x\ge0]$, $\mathbb{I}[x=0]$  
                                                        & No  & \cref{sec:comparison} \\
Boolean functions    & $\{0,1\}^m\to\{0,1\}$      & any Boolean function of $m$ bits                       & Yes & \cref{sec:ffnn_boolean} \\
Conditional (if)      & $\{0,1\}\times\R^2\to\R$ & $\text{if }p=1\text{ then }x\text{ else }y$                              & Yes & \cref{sec:ffnn_conditional} \\
CPWL $f$              & $\R^d\to\R$               & any continuous piecewise-linear function with finitely many pieces       & Yes & \cref{sec:ffnn_cpwl} \\
Cancel Residual       & $\R^d\to\R^d$             & builds $f'$ so that $f'(\vec{x})+\vec{x}=f(\vec{x})$     & Yes & \cref{sec:ffnn_cancel_residual} \\
\bottomrule
\end{tabularx}
\caption{Feed-forward-layer constructions.  “Exact?” states whether the network realises the target function exactly or only approximately.
}
\label{tab:ffnn_cheatsheet}
\end{table}

\begin{table}[p]
\centering\small
\renewcommand\arraystretch{1.2}
\begin{tabularx}{\textwidth}{@{}>{\bfseries}l >{\centering\arraybackslash}m{3.2cm} X c c@{}}
\toprule
Function & Mapping & Definition / Behaviour & Weighting & Reference \\ \midrule
Identity            & $(\R^d)^+\to(\R^d)^+$ & passes each token through unchanged & $\SMAT$, $\AHAT$, $\UHAT$ & \cref{sec:identity-attention} \\
Uniform Average     & $(\R^d)^+\to(\R^d)^+$ & each token outputs the mean of all unmasked tokens (prefix‐mean if causal mask is used) & $\SMAT,\AHAT$ & \cref{sec:uniform-attention} \\
First            & $(\R^d)^+\to(\R^d)^+$ & retrieves the value at the first position & $\SMAT$, $\AHAT$, $\UHAT$ & \cref{sec:first} \\
Predecessor         & $(\R^d)^+\to(\R^d)^+$ & each token retrieves the content of the previous position ($i-1$) & $\AHAT,\UHAT$ & \cref{sec:predecessor} \\
Index Lookup & $(\R^d)^+ \to \R^+ $ & Lookup values at a particular index in the sequence & $\SMAT,\AHAT,\UHAT$ & \cref{sec:table-lookup}\\
Tie-breaking        & —                         & add a tiny bias ($\pm1/(j+1)$) to scores so $\softmax$ emulates left- or right-hardmax & $\SMAT,\AHAT$ & \cref{sec:tie-breaking} \\
Multi-head        & —                         & Simulate multi-headed attention using a single head & $\SMAT,\AHAT,\UHAT$ & \cref{sec:multi-head} \\
\bottomrule
\end{tabularx}
\caption{Self-attention layer constructions. 
``Weighting'' lists which attention weighting function each construction works with: softmax ($\SMAT$), average-hard ($\AHAT$), and unique-hard ($\UHAT$).
}
\label{tab:sa_cheatsheet}
\end{table}

\subsection{Transformers}
\label{subsec:transformer}
A \emph{transformer} is a neural network that, in this paper, defines a length-preserving mapping from strings to strings. An input layer maps a string to a sequence of vectors. 
Then the network processes sequences of vectors through multiple layers, each consisting of a self-attention sublayer followed by a feed-forward sublayer.  
Finally, an output layer maps the sequence of vectors to a string.

Here, we define the overall structure of the transformer as a function, before defining the individual components in subsequent sections. 

\begin{definition}
A \emph{transformer} is a function
\begin{align}
\mathsf{tf} \colon \Sigma^+ &\lpto \mathcal{Y}^+ \notag \\
\mathsf{tf}(x_1 \cdots x_n) &= y_1 \cdots y_n \quad \text{where} \\
\tlio^{(0)}_i &= \mathsf{tf}.\mathsf{we}(x_i) + \mathsf{tf}.\mathsf{pe}_n(i) & i \in [n] \label{eq:embedding} \\
(\tlio^{(1)}_1, \ldots, \tlio^{(1)}_n) &= \mathsf{tf}.\mathsf{tl}^{(1)}(\tlio^{(0)}_1, \ldots, \tlio^{(0)}_n) \label{eq:first_layer} \\
&\vdotswithin{=} \notag \\
(\tlio^{(L)}_1, \ldots, \tlio^{(L)}_n) &= \mathsf{tf}.\mathsf{tl}^{(L)}(\tlio^{(L-1)}_1, \ldots, \tlio^{(L-1)}_n) \label{eq:last_layer} \\
\vec{y}_i &= \mathsf{tf}.\mathsf{out}(\tlio^{(L)}_i) & i \in [n]
\end{align}
where
$\mathsf{tf}.\mathsf{we}$ and $\mathsf{tf}.\mathsf{pe}_n$ are embedding functions, defined below (\cref{sec:embedding}),
the $\mathsf{tf}.\mathsf{tl}^{(\ell)}$ are transformer layers, defined below (\cref{sec:transformer_layer}), and
$\mathsf{tf}.\mathsf{out}$ is an unembedding function with some output space $\mathcal{Y}$, discussed below (\cref{sec:unembedding}).
\end{definition}

\subsubsection{Embedding}\label{sec:embedding}

In order to process strings, the transformer needs to map them from sequences of discrete symbols to sequences of real-valued vectors.
Let $\str{x} = x_1 x_2 \cdots x_n \in \Sigma^+$ be an input sequence of length~$n$.
In \cref{eq:embedding}, each token $x_i$ is mapped to a vector $\mathsf{tf}.\mathsf{we}(x_i) + \mathsf{tf}.\mathsf{pe}_n(i)$, where $\mathsf{tf}.\mathsf{we} \colon \Sigma \to \R^{d}$ is a \emph{word embedding} and $\mathsf{tf}.\mathsf{pe}_n \colon \mathbb{N} \to \R^d$ is a \emph{position embedding}. For more on word embeddings, see \cref{sec:symbols}; for more on position embeddings, see \cref{sec:pe,sec:integers}.

\subsubsection{Transformer Layers}
\label{sec:transformer_layer}

Here, we will define the transformer layers and sublayers. 
For now, we have omitted layer normalization; it will be discussed in \cref{sec:layernorm}.

\begin{definition}
A \emph{transformer layer} is a function 

\begin{align*}
\mathsf{tl} \colon (\R^d)^+ &\lpto (\R^d)^+ \\
\mathsf{tl}(\tlio_1, \ldots, \tlio_n) &= (\tlio'_1, \ldots, \tlio'_n) \quad \text{where} \\
(\tlhid_1, \ldots, \tlhid_n) &= \mathsf{tl}.\mathsf{sa}(\tlio_1, \ldots, \tlio_n) + (\tlio_1, \ldots, \tlio_n) \\
\tlio'_i &= \mathsf{tl}.\mathsf{ff}(\tlhid_i) + \tlhid_i & i \in [n]
\end{align*}
where $\mathsf{tl}.\mathsf{sa}$ is a self-attention layer (see below, \cref{def:attn}) and $\mathsf{tl}.\mathsf{ff}$ is a FFN (see below, \cref{def:ffn}).
\end{definition}

\paragraph{Self-attention}

A self-attention layer computes weighted sums of \emph{value} vectors at all positions, where the weights are determined by \emph{query} and \emph{key} vectors.

\begin{definition}[Self-Attention]\label{def:attn}
A \emph{(scaled dot-product) self-attention layer} with $d$ input/output dimensions and $\dkey$ key/value dimensions is a length-preserving function
\begin{align*}
    \mathsf{sa} \colon (\R^d)^+ &\lpto (\R^d)^+
\end{align*}
with linear transformations
\begin{align*}
\mathsf{sa}.\qproj, \mathsf{sa}.\kproj \colon \R^d &\to \R^\dkey \\
\mathsf{sa}.\vproj \colon \R^d &\to \R^{d}
\end{align*}
and a weighting function
\[
\mathsf{sa}.\mathcal{S} \colon \R^+ \lpto \R^+
\]
is a function
where, for positions $i$ and $j$:
\begin{align}
\mathsf{sa}\mleft(\sain_{1}, \ldots, \sain_{n} \mright) &= (\saout_1, \ldots, \saout_n) \quad \text{where} \\
\qvec{i} &= \mathsf{sa}.\qproj \, \sain_{i} \\
\kvec{j} &= \mathsf{sa}.\kproj \, \sain_{j} \\
\vvec{j} &= \mathsf{sa}.\vproj \, \sain_{j} \\
s_{i,j} &= \frac{\qvec{i}^\top \kvec{j}}{\sqrt{\dkey}} \label{eq:att_logit} \\
\alpha_{i,*} &= \mathsf{sa}.\mathcal{S}(s_{i,*}) \\
\saout_i &= \sum_{j=1}^{n} \alpha_{i,j} \vec{v}_{j}  \end{align}
\end{definition}

We call the $s\elt{i,j}$ the \emph{attention scores}, and we call the $\alpha\elt{i,j}$ the \emph{attention weights}. Other variations on attention are used very often in theoretical papers; these are discussed below in \cref{sec:attention}.

Real transformers apply a linear transformation $\mathsf{sa}.\oproj$ to $\saout_i$, but this does not add any expressivity, so we omit it.
Real transformers also use \emph{multi-head} self-attention, but we don't use it because it can be emulated using single-head self-attention, described in \cref{sec:multi-head}. 

In \emph{future-masked} (also known as \emph{causally}-masked) self attention, every position is forced to attend only to preceding positions, by redefining the attention scores in \cref{eq:att_logit} to the following, letting $\exp(-\infty) = 0$:
\[ s_{i,j} = \begin{cases} \displaystyle\frac{\qvec{i}^\top \kvec{j}}{\sqrt{\dkey}} & j \le i \\
-\infty & \text{otherwise.}
\end{cases}
\]
It is sometimes convenient to consider \emph{strictly future-masked} attention, in which case we have:
\[ s_{i,j} = \begin{cases} \displaystyle\frac{\qvec{i}^\top \kvec{j}}{\sqrt{\dkey}} & j < i \\
-\infty & \text{otherwise.}
\end{cases}
\]
where it is sometimes convenient to let the first position's attention weight \(\alpha_{1,1} = 0\) as a special case.

\paragraph{Position-wise feed-forward networks}

A feed-forward layer computes two position-wise affine transformations, with an activation function in between. Here it is defined with $\ReLU$, but other choices can be found in \cref{sec:feedforward}.

\begin{definition}[ReLU]\label{def:relu}
    A \emph{rectified linear unit} or ReLU \citep{Fukushima1975} is a non-linear activation function
    \begin{align}
    \ReLU \colon \R^d &\to \R^d \notag \\
    \ReLU(x) &= \max(0,x).
    \end{align}
\end{definition}

\begin{definition}[Feed-Forward Network] \label{def:ffn}
    A \emph{two-layer feed-forward network} or FFN or FFNN is a function 
    \begin{align}
      \mathsf{ff} \colon \R^d &\to \R^d \notag \\
      \mathsf{ff}(\ffin) &= \ffout \quad \text{where} \\
      \ffhid &= \ReLU(\ffwone(\ffin) + \ffbone) \\
      \ffout &= \ffwtwo(\ffhid) + \ffbtwo
    \end{align}
    with parameters
    \begin{align*}
      \mathsf{ff}.\ffwone &\in \R^{\dhid \times d} & \mathsf{ff}.\ffbone &\in \R^\dhid \\
      \mathsf{ff}.\ffwtwo &\in \R^{d \times \dhid} & \mathsf{ff}.\ffbtwo &\in \R^d.
    \end{align*}
\end{definition}

This completes the definition of a transformer layer. 
In \cref{eq:first_layer,eq:last_layer}, a transformer has $L$ transformer layers, each with its own parameters.
After the last layer, the transformer produces an output sequence $\bigl(\tlio^{(L)}_1, \ldots, \tlio^{(L)}_n\bigr) \in (\R^{d})^+$.
There is considerable variation in what happens next.

\subsubsection{Unembedding}
\label{sec:unembedding}

After the last layer outputs a sequence of vectors \(\tlio^{(L)}_1, \ldots, \tlio^{(L)}_n \in \R^d\), the \emph{unembedding} function $\mathsf{tf}.\mathsf{out}$ converts these vectors into some useful output. The most common setups are classification and language modeling.

\paragraph{Classification}

In theoretical papers, it's very common to treat a transformer as a binary classifier, in order to line them up with computational complexity classes. To do this, we look only at the last output vector, $\tlio^{(L)}_n$. (Some papers, following typical usage of BERT \citep{devlin2019bert} as a classifier, add a $\cls$ symbol to the beginning or end of the string, and use the output at that position as the classification.)
This vector can be interpreted using any of the Boolean representation schemes described in \cref{sec:booleans}. An extremely common choice is for $\mathsf{tf}.\mathsf{out}$ to linearly project the output vector down to a scalar, and to interpret positive values as acceptance and other values as rejection:
\begin{align}
\mathsf{tf}.\mat{W}_{\textnormal{out}} &\in \R^{1 \times d} \notag \\
\vec{y}_i &= \ind{\mathsf{tf}.\mat{W}_{\textnormal{out}}\bigl(\tlio^{(L)}_i\bigr) > 0}.
\end{align}

For multi-class classification, we can interpret $\tlio^{(L)}_i$ using any categorical representation (\cref{sec:symbols}). Most commonly,
\begin{align}
\mathsf{tf}.\mat{W}_{\textnormal{out}} &\in \R^{|\Sigma| \times d} \notag \\
\vec{y}_i &= \argmax \left( \mathsf{tf}.\mat{W}_{\textnormal{out}}\bigl(\tlio^{(L)}_i\bigr)\right). \label{eq:out_argmax}
\end{align}

\paragraph{Language modeling}
In language models, $\mathsf{tf}.\mathsf{out}$ maps each $\tlio^{(L)}_i$ to a probability distribution over $\alphabet$:
\begin{align}
\mathsf{tf}.\mat{W}_{\textnormal{out}} &\in \R^{|\Sigma| \times d} \notag \\
\vec{y}_i &= \softmax \left( \mathsf{tf}.\mat{W}_{\textnormal{out}}\bigl(\tlio^{(L)}_i\bigr)\right). \\
\intertext{This induces a probability distribution over strings $\str{y} = y_1 \cdots y_n$, given input strings $\str{x}$:}
P(\str{y} \mid \str{x}) &= \prod_{i=1}^n [\vec{y}_i]_{y_i}. \label{eq:str_dist}
\end{align}
But most modern LMs are autoregressive, which means that $x_1 = \bos$, $x_i = y_{i-1}$ for $i = 2, \ldots, n$, and $y_n = \eos$. 
That is, the transformer takes as input a prefix of a string, and outputs a distribution over the next symbol in the string.
Then \cref{eq:str_dist} defines an unconditional probability distribution over strings.

\subsection{Uniformity}
\label{sec:uniformity}

In theoretical constructions of transformers, an important question is how much the construction can depend on the length of the input sequence ($n$), a concept known as uniformity. This question has practical relevance, since sequence lengths currently stretch into the millions. In this cookbook, we will observe three principles.

First, properties of a transformer not involving the parameters may depend on $n$. This includes:
\begin{itemize}
\item Position embeddings can be a function of $n$.
\item Numerical precision of computed values can depend on $n$.
\end{itemize}

Second, the number or value of the parameters may depend on a maximum length $N$. That is, for any $N$, there exists a transformer that works on all sequences of length $n \le N$. In these cases, we will state how the number of parameters scales as a function of $N$ (e.g., $O(N)$ or $O(\log N)$).

Third, if there is any dependence on $n$ or $N$, its computational complexity should be noted (if not obvious).
If unconstrained, these dependencies could allow constructions to decide undecidable languages. While such constructions can be mathematically interesting, care should be taken when drawing implications from them.
Instead, it is preferable to have some constraints, such as: for each $N$, the parameter values and positional encodings for sequences of length up to $N$ can be computed in $\mathrm{poly}(\log N)$ time.

\section{Basic Ingredients}
\label{sec:basic_ingredients}

While transformers are length-preserving functions that operate on sequences of real-valued vectors, they are often used to operate on discrete data, with much freedom on how to represent these discrete values during computation. 
For instance, the ability to represent Boolean values or integers is often crucial for tasks like formal language recognition, string transduction, and other algorithmic constructions.
Here, we discuss how to represent these discrete values in a way that is compatible with the architectural specifics of the transformer.
The particular methods of representation often shed light on the expressive capacity of transformers.

\subsection{Boolean Representations}
\label{sec:booleans}

Boolean values are a very useful and commonly used ingredient, and there are many ways to represent them (\cref{tab:booleans}).
To choose a representation, one must consider how different parts of the transformer operate on them.

\paragraph{Position-wise FFNs} can be used to compute position-wise Boolean operations, covered in detail in \cref{sec:ffnn_boolean}, and also to convert between different representations. 
Since the FFNs compute continuous piecewise linear functions, this rules out representations like $\text{false} = (-\infty, 0]$, $\text{true} = (0, +\infty)$, as negation would not be a continuous function. The representations $\text{false} = 0$, $\text{true} = 1$ or $\text{false} = -1$, $\text{true} = +1$ work fine.

\paragraph{Layer normalization} is covered in detail in \cref{sec:layernorm}. As discussed there (\cref{sec:layernorm_circumvent}), we usually just want layer normalization to preserve truth values (false stays false, true stays true), so we can use a representation like
\begin{align*}
    \text{false} &=
    \begin{bmatrix}
      +1 \\ -1
    \end{bmatrix} &
    \text{true} &=
    \begin{bmatrix}
      -1 \\ +1
    \end{bmatrix}.
\end{align*}

Unfortunately, there doesn't seem to be any one representation that has all the properties we want. It may be necessary to switch between representations as needed.

\begin{table}
\centering
\begin{tabular}{cc|cccc}
\toprule
false & true & Continuous Ops. & Min.~Gap & Fixed Mean & Fixed Variance \\
\midrule
$(-\infty,0]$ & $(0,\infty)$ & no & no & no & no \\[1ex]
0 & 1 & yes & yes & no & no \\[1ex]
$-1$ & $+1$ & yes & yes & no & no \\[1ex]
$(-\infty,0)$ & $(0,\infty)$ & yes & no & no & no \\[1ex]
$0$ & $[1,\infty)$ & yes & yes & no & no \\[1ex]
{\scriptsize$\begin{bmatrix} -1 \\ +1 \end{bmatrix}$} & {\scriptsize$\begin{bmatrix} +1 \\ -1 \end{bmatrix}$} & yes & yes & yes & yes \\[1ex]
{\scriptsize$\begin{bmatrix} -\delta \\ +\delta \end{bmatrix}$} & {\scriptsize$\begin{bmatrix} +\delta \\ -\delta \end{bmatrix}$} & yes & no & yes & no \\
\bottomrule
\end{tabular}
\caption{Different Boolean representations and their properties.}
\label{tab:booleans}
\end{table}

\subsection{Categorical Representations}\label{sec:symbols}

More generally, we may want to represent elements of a finite set. For example, word embeddings are representations of words drawn from a finite vocabulary, and position embeddings are sometimes representations of positions up to some finite maximum length. We refer to any such representation as a \emph{categorical representation}.
In theoretical constructions, categorical representations should ideally be separated by a minimum distance, for the same reasons as noted above for Booleans.

Additionally, since it's common to attend only to a particular category, the categorical representations are often orthogonal. That is, they are simply one-hot vectors. To accommodate a set of $k$ categories, a width of $d \ge k$ is required.
If we are concerned about how large $d$ is, we may be able to use \emph{almost orthogonal} vectors (\cref{sec:almost_orthogonal}).

\subsection{Integer Representations}
\label{sec:integers}

Integer values are another common construction, frequently used to represent counts, indices, or other discrete quantities.
These representations share the same concerns as Boolean values in transformers, but furthermore the sign and magnitude of the integers can be important, as we may need to add and compare them. 

Representing integers requires greater numerical precision to represent larger integers.
This means that the numerical precision of the transformer plays a role in the maximum size of integers that can be represented. For more on precision, see \cref{sec:rounding}.
Since positions are integers, see also \cref{sec:pe}.

The simplest representation of an integer $C$ is just $C$ itself. But this representation is unbounded, and there is a theorem that in a transformer with bounded position embeddings and Lipschitz position-wise functions \citep{hahn-2020-theoretical} or even non-Lipschitz layer normalization \citep{chiang+:icml2023}, all computed values are bounded. So it may be difficult to compute values in this representation.

Instead, it's very common to see a count $C$ stored as $C/n$ where $n$ is the length of the input \citep{chiang+:icml2023} or $C/i$ where $i$ is the position where this integer is stored.
We often end up with the former when using unmasked uniform attention to compute integer values, and the latter when using future-masked uniform attention.
A scale-invariant representation of integers is via the layer norm (\cref{sec:layernorm_hash}). 

Since each position has the same denominator, position-wise operations are straightforward to implement using this representation. Position-wise addition and subtraction is described in \cref{sec:ffnn_addition}. Position-wise comparison of integers is described---with some important caveats---in \cref{sec:comparison}.

\subsection{Special Symbols}

In \cref{sec:unembedding}, we mentioned the special symbols \bos{} (beginning of sequence), \eos{} (end of sequence), and \cls{} (classification). Having one of these tokens in the sequence also allows a transformer to compute the value $\frac{1}{n+1}$ (or $\frac{1}{i+1}$ if future-masked) by using uniform attention (\cref{sec:uniform-attention}). 

\subsection{Positional Encodings}
\label{sec:pe}

Transformers use positional encodings as a method for incorporating information about token positions, since neither self-attention nor position-wise FFNs have an inherent representation of token ordering. 
The original transformer model \citep{vaswani-etal-2017-attention} used sinusoidal positional encodings, but since then, many encodings have been proposed, addressing length generalization \citep{kazemnejad2024impact}, relative distances between tokens \citep{shaw2018self}, as well as representing tree structure in sequences \citep{shiv2019}. 
In this section, we focus on different kinds of positional encodings and how they may affect the expressiveness of the model.

\subsubsection{Simple Positional Encodings}

\paragraph{The value $\frac{1}{i}$}

Obtaining the value $\frac{1}{i}$ at position $i$ can be introduced via a positional encoding, or it can be computed using a future-masked uniform attention layer (\cref{sec:uniform-attention}) and the presence of a $\bos$ token. This positional encoding plays a role in the constructions of \citet{barcelo-etal-2024-logical}, \citet{merrill-sabharwal-2024-cot}, and others. Additionally, as described in \cref{sec:tie-breaking}, this value may be used to simulate a $\UHAT$ using an $\AHAT$.

\paragraph{Length-averaged $\frac{i}{n}$}
Similar to above, the value $\frac{i}{n}$ may also be obtained at position $i$ using an unmasked uniform attention layer \cref{sec:uniform-attention} and the presence of a beginning-of-sequence token.
This positional encoding can be found in the constructions of \cite{merrill2023parallelism,chiang-cholak-2022-parity,strobl2024transformers}.
One reason for using $\frac{i}{n}$ in place of $i$ is that sometimes it's desirable for the position encoding to be bounded.

\paragraph{Powers of $i$}

Powers of $i$ other than $i^{-1}$ can also be used as positional encodings. 
In particular, $i$ and $i^2$ are crucial to perform certain index lookups in~\cref{sec:table-lookup}.
Higher powers like $i^3$ are used by \citet{yang-etal-2025-simulating} in order to simulate table-lookup exactly using soft-attention. 
Generally, these large powers of $i$ are used to create attention scores that scale rapidly with the sequence length, to ensure that attention can be focused on a single position.

\subsubsection{Sinusoidal Positional Encoding}

The original paper on transformers \citep{vaswani-etal-2017-attention} used \emph{sinusoidal position encodings}.
Suppose that the embedding dimension \(d\) is even, then for \(0 \leq c \leq \frac{d}{2} - 1\), let:
\begin{align*}
    \msf{pe}(i, 2c+1) &= \sin\left(\frac{i}{M^{2c/d}}\right) \\
    \msf{pe}(i, 2c) &= \cos\left(\frac{i}{M^{2c/d}}\right)
\end{align*}
where \(M\) is a large number, such as \(M = 10000\) \citep{vaswani-etal-2017-attention}.
Sinusoidal positional encodings. Using the table-lookup operation described in \cref{sec:table-lookup}, these positional encodings can also be used to do modulo counting -- to recognize $\mathsf{PARITY}$, for instance.

\section{Feed-Forward Layers}
\label{sec:feedforward}

Feed-forward layers are a fundamental building block of transformers.
Empirically, they have been observed to contribute to the expressive power of transformers \citep{geva-etal-2021-transformer}.
Here, we show how feed-forward layers can compute a variety of important functions.
Given enough parameters and a large enough input and hidden dimension, a feed-forward layer can approximate a very large class of functions \citep{cybenko:1989,hornik1989multilayer}, but here we focus on those that can be expressed exactly, with a fixed number of parameters.

Much of the expressive power of feed-forward layers comes from their non-linear \emph{activation functions}.
The concept was originally inspired by the threshold-based firing of biological neurons~\citep{mcculloch1943logical}, and they were later shown to be essential for allowing neural networks to learn non-linear functions, a critical property for modeling complex data~\citep{hornik1989multilayer}.
We focus here on two popular activation functions, the rectified linear unit (ReLU) and Gaussian error linear unit (GELU).
Other notable activation functions include sigmoid, softmax, hyperbolic tangent, exponential linear unit (ELU), and gated linear unit (GLU).

Recall (\cref{def:ffn}) that a FFN has the form
\[
    \mathsf{FFN}(\vec{x}) = \ffwtwo\ReLU( \ffwone \vec{x} + \mat{b}_1) + \mat{b}_2.
\]
In the following constructions, we will specify for each FFN the parameters $\mat{W}_1,\mat{W}_2,\mat{b}_1$, and $\mat{b}_2$, which can be plugged into \cref{def:ffn}.

\subsection{Continuous Piecewise Linear Functions}\label{sec:ffnn_cpwl}

$\ReLU$ FNNs may be used to represent any continuous piecewise linear function made up of a finite number of linear segments (pieces).
We first define such functions as follows.

\begin{definition}[Continuous Piecewise Linear Function]
  Let $X \subseteq \R^d$. A function $f \colon X \to \R$ is \emph{continuous piecewise linear (CPWL)} if there are closed polyhedral subsets $X_1, \ldots, X_n \subseteq X$ such that $\bigcup_{i\in [n]}  X_i = X$, and for all $i \in [n]$, $\left.f\right|_{X_i}$ is affine.
\end{definition}

For the univariate case ($d=1$), let \(f \colon \mathbb{R} \to \mathbb{R}\) be CPWL. Assume that $f$ has $n \ge 2$ pieces (of which the first and last extend to infinity) and is represented by \(x_1 < \ldots < x_{n+1}\) and \(y_1, \ldots, y_{n+1}\) such that \(f(x_k) = y_k\) for \(k = 1, \ldots, n+1\). Points $(x_2, y_2)$ to $(x_n, y_n)$ are the ``knots'' of $f$, while points $(x_1, y_1)$ and $(x_{n+1}, y_{n+1})$ lie in the first and last piece.

The main idea is to concatenate a sequence of $\ReLU$ components at these knots, such that each component approximates one linear piece of $f$ while ``undoing'' the effect of its immediate predecessor.
Specifically, we first rewrite \(f\) as:
\begin{align*}
    f(x) &= y_1 + m_1 (x-x_1) + \sum_{k=2}^{n} (m_k - m_{k-1}) \, \ReLU(x - x_k) \\
    &= \begin{dcases}
        y_1 + m_1 (x - x_1), &x \leq x_2 \\
        y_k + m_{k} (x - x_k), & x_k \leq x \leq x_{k+1} \\
        y_n + m_n (x - x_n), & x \geq x_n.
    \end{dcases}
\end{align*}
where \(m_1, \ldots, m_n\) are the slopes defined as:
\begin{align*}
  m_k &= \frac{y_{k+1} - y_{k}}{x_{k+1} - x_{k}}, \quad \text{for } k = 1, \ldots, n.
\end{align*}
This is achievable with the following FFN of hidden dimension \(n+1\):
\begin{align*}
f.\ffwone &= 
\begin{bmatrix}
-1 \\
\pos1 \\
\pos1 \\
\pos1 \\
\pos\vdots \\
\pos1
\end{bmatrix}
\raisebox{-6pt}{$\left.\vphantom{\begin{bmatrix}1 \\ 1 \\ 1 \\ \vdots \\ 1\end{bmatrix}}\right\}\text{$n$ copies}$}
& f.\ffbone &= \begin{bmatrix}
0 \\
0 \\
-x_2 \\
-x_3 \\
\vdots \\
-x_n
\end{bmatrix} \\
f.\ffwtwo &= \begin{bmatrix}
-m_1 &
m_1 &
(m_2 - m_1) &
\cdots &
(m_n - m_{n-1})
\end{bmatrix} &
f.\ffbtwo &= \begin{bmatrix}
y_1 - m_1 x_1
\end{bmatrix}.
\end{align*}
For the multivariate case ($f \colon \R^d \to \R$), any CPWL function with $k$ pieces can be computed exactly by a FFNN with $O(\log k)$ layers \citep{arora+:2018}; we don't reproduce this construction here, but below, we show some cases that can be computed in two layers.

\subsection{Canceling Residual Connections}\label{sec:ffnn_cancel_residual}

As noted above, an FFN $f\colon\R^d\to \R^d$ is normally used with a residual connection, $\vec{y} = f(\vec{x}) + \vec{x}$. Sometimes, we don't want the residual connection, and fortunately, it's always possible to cancel it out \citep{chiang+:icml2023}.
That is, there is an FFN $f'$ with parameters
\begin{align*}
    f'.\ffwone &= \begin{bmatrix}
        \pos f.\ffwone\\
        \pos\mat{I}\\
        -\mat{I}
    \end{bmatrix} & 
    f'.\ffbone &= \begin{bmatrix}
        f.\ffbone\\
        \textbf{0}\\
        \textbf{0}
    \end{bmatrix}\\
    f'.\ffwtwo &= \begin{bmatrix}
        f.\ffwtwo & -\mat{I} & \mat{I}
    \end{bmatrix} & 
    \quad f'.\ffbtwo &= f.\ffbtwo
\end{align*}
so that $\vec{y} = f'(\mathbf{x}) + \mathbf{x} = f(\mathbf{x})$; that is, $f'$ with a residual connection behaves like $f$ without a residual connection. From now on, we assume without loss of generality that residual connections are optional.

\subsection{Identity Function}\label{sec:ffnn_identity}

We very often need an FFN that does nothing, either because we need two self-attentions in a row or as a building block for the other constructions below.
\begin{align*}
  \mathsf{id} \colon \R^d &\to \R^d \\
  \mathsf{id}(x) &= x.
\end{align*}

If a residual connection is present, then this is straightforward: zero out the FFN and retain only the residual connection.
In particular, the zero FNN is defined as follows:
\begin{align*}
\mathsf{zero}.\ffwone = \mathsf{zero}.\ffwtwo &= \vec0_{d \times d}\\
\mathsf{zero}.\ffbone = \mathsf{zero}.\ffbtwo &= \vec0_{d},
\end{align*}
Then, when combined with a residual connection, the zero FNN satisfies the identity \(x + \msf{zero}(x) = x\).

If a residual connection is not present, however, then some encoding is necessary.
We first consider the simplified case of embedding dimension \(d = 1\), in which case we set:
\begin{equation}
\begin{aligned}
\mathsf{id}.\ffwone &= \begin{bmatrix}
\pos1 \\ 
-1
\end{bmatrix} & \mathsf{id}.\ffbone &= \mathbf{0} \\
\mathsf{id}.\ffwtwo &= \begin{bmatrix}
1 & -1
\end{bmatrix} & \mathsf{id}.\ffbtwo &= 0.
\end{aligned}
\label{eq:identity}
\end{equation}
This constructs the identity FFN, as it would expand as: 
\[
    \mathsf{id}(x) = \ReLU(x) + -\ReLU(-x) = x.
\]
To generalize this to vectors in $\R^d$ ($d \geq 1$), we may use parallel composition (\cref{thm:parallel_composition}) and routing (\cref{thm:routing}) to stack $d$ copies of this construction into a single FFN:
\begin{equation} \label{eqn:ffwd_id_parallel_d2}
\begin{aligned}
    \mathsf{id}.\ffwone &= \begin{bmatrix} \mat{I}^d \\ -\mat{I}^d \end{bmatrix}
        & \mathsf{id}.\ffbone &= \mathbf{0}_{2d} \\
    \mathsf{id}.\ffwtwo &= \begin{bmatrix} \mat{I}^d & -\mat{I}^d \end{bmatrix}
        & \mathsf{id}.\ffbtwo &= \mathbf{0}_d
\end{aligned}
\end{equation} 
In general, we will state constructions using scalars when vectors are not necessary, but they can be generalized to vectors as we have done here.

\subsection{Min and Max}\label{sec:ffnn_minmax}
    It is often desirable to compute the minimum or maximum of two numbers:
    \begin{align*}
    \mathsf{min}, \mathsf{max} \colon \R \times \R &\to \R \\
    \mathsf{min}\mleft(\begin{bmatrix} x \\ y \end{bmatrix}\mright) &= \min(x, y) \\
    \mathsf{max}\mleft(\begin{bmatrix} x \\ y \end{bmatrix}\mright) &= \max(x, y).
    \end{align*}
    These functions are CPWL, so there exist FFNs to compute them:
    
\begin{align*}
\mathsf{min}.\ffwone &= \begin{bmatrix}
\pos1 & \pos0 \\
-1 & \pos0 \\
\pos1 & -1
\end{bmatrix} & \mathsf{min}.\ffbone &= \vec{0} \\
\mathsf{min}.\ffwtwo &= \begin{bmatrix}
1 & -1 & -1
\end{bmatrix} & \mathsf{min}.\ffbtwo &= \vec0 \\
\mathsf{max}.\ffwone &= \begin{bmatrix}
\pos1 & \pos0 \\
-1 & \pos0 \\
-1 & \pos1
\end{bmatrix} & \mathsf{max}.\ffbone &= \vec{0} \\
\mathsf{max}.\ffwtwo &= \begin{bmatrix}
1 & -1 & 1
\end{bmatrix} & \mathsf{max}.\ffbtwo &= \vec0
\end{align*}

Then
\begin{align*}
\mathsf{min}\mleft(\begin{bmatrix}
x \\ y
\end{bmatrix}
\mright) &= \ReLU(x) - \ReLU(-x) - \ReLU(x-y) = x - \ReLU(x-y) = \min(x,y) \\
\mathsf{max}\mleft(\begin{bmatrix}
x \\ y
\end{bmatrix}
\mright) &= \ReLU(x) - \ReLU(-x) + \ReLU(y-x) = x + \ReLU(y-x) = \max(x,y).
\end{align*}
Note that this construction maps from \(\mbb{R}^2\) to \(\mbb{R}^1\).
If one wishes to make the input-output dimensionalities the same, then we may pad the output weights with zero-valued rows.
For example, to make the output of a \(\mathsf{max}\) component in \(\mbb{R}^2\), we may zero-pad the output weights to shape \(\mathsf{max}.\ffwtwo \in \mbb{R}^{2 \times 3}\) and \(\mathsf{max}.\ffbtwo \in \mbb{R}^{2}\).
Similar padding ideas can be applied to other constructions to enforce desired dimensionalities.

\subsection{Addition and Subtraction}\label{sec:ffnn_addition}

    Addition (or subtraction, or any linear function) is an easy extension of the identity (\cref{eq:identity}).
    \begin{align*}
    \mathsf{add} \colon \R \times \R &\to \R \\
    \mathsf{add}(x, y) &= x + y.
    \end{align*}
    
    \begin{align*}
        \mathsf{add}.\ffwone &= \begin{bmatrix}
        \pos1 & \pos0  \\
        -1 & \pos0  \\
        \pos0 & \pos1  \\
        \pos0 & -1 
        \end{bmatrix} &
        \mathsf{add}.\ffbone &= \vec{0} \\
        \mathsf{add}.\ffwtwo &= \begin{bmatrix}
        1 & -1 & 1 & -1
        \end{bmatrix} &
        \mathsf{add}.\ffbtwo &= \vec{0}
    \end{align*}
    Then
    \begin{align*}
    \mathsf{add}\mleft(\begin{bmatrix} x \\ y\end{bmatrix}\mright) &=
    \ReLU(x) - \ReLU(-x) + \ReLU(y) - \ReLU(-y) 
    = 
    x + y. 
    \end{align*}

\subsection{Multiplication}
\label{sec:ffnn_multiplication}

Multiplication by a constant $c$ is easy (use the identity recipe (\cref{sec:ffnn_identity}) with the routing lemma (\cref{thm:routing}) to premultiply or postmultiply by $c$), but multiplication of two activations can only be approximated, and requires an activation function with nonzero second derivative \citep[Lemma 4]{akyurek2022learning}. \Citet[Lemma C.1]{feng2023revealing} give a similar approximation.
\begin{align*}
\mathsf{mul} \colon \R \times \R &\to \R \\
\mathsf{mul}\mleft(\begin{bmatrix} x \\ y \end{bmatrix}\mright) &\approx xy.
\end{align*}

What we ideally want is a quadratic activation function. Here, we use the Gaussian Error Linear Unit (GELU), which is used in modern transformer architectures like BERT and GPT. It is not exactly quadratic, but we can think of it as an approximation of its second-order Taylor approximation, which is. 
\begin{definition}[GELU]
    A \emph{Gaussian error linear unit} or GELU  \citep{hendrycks2016gaussian} is a non-linear activation function
    \begin{align}
    \GELU \colon \R^d &\to \R^d \notag \\
        \GELU(x) &= x\,\Phi(x) \\
        &= \frac{x}{2} \left(1 + \erf\left(\frac{x}{\sqrt{2}}\right)\right) \label{eq:gelu} \\
        &\approx \frac{x}{2} \left(1 + \tanh\left(\sqrt{\frac{2}{\pi}} \left(x + 0.044715x^3\right)\right)\right) \label{eq:gelu_approx1} \\
        &\approx x\,\sigmoid(1.702 x) \label{eq:gelu_approx2}
    \end{align} 
    where $\Phi$ is the cumulative distribution function of the standard normal distribution, $\erf$ is the Gauss error function, and $\sigmoid(x) = 1/(1+e^{-x})$.
    The approximations \labelcref{eq:gelu_approx1,eq:gelu_approx2} are from \citet{hendrycks2016gaussian}.
    PyTorch implements a choice between \labelcref{eq:gelu} or \labelcref{eq:gelu_approx1}.\footnote{\url{https://pytorch.org/docs/stable/generated/torch.nn.GELU.html}}
\end{definition}

Regardless of whether GELU is defined as \cref{eq:gelu} or \cref{eq:gelu_approx1}, we have
\begin{align*}
  \GELU(z) &= \frac{z}{2} + \frac{z^2}{\sqrt{2\pi}} + R(z) \\
\intertext{where the Lagrange remainder term is, for some $\xi \in [0, z]$,}
R(z) &= \frac16 %
\GELU'''(\xi)
z^3 \\
|R(z)| &\le \frac16|z|^3.
\end{align*}
From this, we can derive
\begin{align*}
    \sqrt{\frac\pi2}
    (\GELU(x + y) - \GELU(x) - \GELU(y))
    &= xy + \epsilon(x, y)
\end{align*}    
where the error is
\begin{align*}
|\epsilon(x,y)| &\le \frac14(|x|+|y|)^3.
\end{align*}

Thus we can construct an FFN with GELU  activations (implemented using the first two terms fo the Taylor expansion) and the following parameters:
\begin{align*}
  \mathsf{mul}.\ffwone &= \begin{bmatrix} 1 & 1 \\ 1 & 0 \\ 0 & 1 \end{bmatrix} & \mathsf{mul}.\ffbone &= \vec{0} \\
  \mathsf{mul}.\ffwtwo &= \begin{bmatrix} \sqrt{\dfrac\pi2} & -\sqrt{\dfrac\pi2} & -\sqrt{\dfrac\pi2} \end{bmatrix} & \mathsf{mul}.\ffbtwo &= \vec0.  
\end{align*} 

\subsection{Comparisons}
\label{sec:comparison}

Binary-valued comparisons ($=$, $<$, $\le$, etc.) are not CPWL because there is a jump from $0$ to $1$.
However, we may approximate them using FFNs.
If some position-independent and length-independent tolerance $\epsilon$ is known apriori, we may construct these FFNs as follows, where below we specify the intended behavior, show its implementation as FFN weights, and give a visualization plot.

\begin{align}
&\begin{aligned}
\mathsf{GTZero}_\epsilon(x) &= \begin{cases}
0 & x \leq 0 \\
\textcolor{gray}{x/\epsilon} & \textcolor{gray}{0 < x < \epsilon} \\
1 & \epsilon \le x \\
\end{cases} \\
\mathsf{GTZero}_\epsilon.\ffwone &= \begin{bmatrix} 1 \\ 1 \end{bmatrix} & \mathsf{GTZero}_\epsilon.\mat{b}_1 &= \begin{bmatrix} \pos0 \\ -\epsilon \end{bmatrix} \\
\mathsf{GTZero}_\epsilon.\ffwtwo &= \begin{bmatrix} 1/\epsilon & -1/\epsilon \end{bmatrix} & \mathsf{GTZero}_\epsilon.\mat{b}_2 &= 0
\end{aligned}
&\begin{tikzpicture}[baseline=0.5cm]
\draw[->] (0,0) -- (0,1.5);
\draw[<->] (-1.5,0) -- (1.5,0);
\draw[very thick] (-1,0) -- (0,0);
\draw[very thick,gray!50] (0,0) -- (0.2,1);
\draw[very thick] (0.2,1) -- (1,1);
\draw (0.1,1)--(-0.1,1) node[left] {$1$};
\draw (0.2,0.1)--(0.2,-0.1) node[below] {$\epsilon$};
\end{tikzpicture} \\[4ex]
&\begin{aligned}
\mathsf{GEZero}_\epsilon(x) &= \begin{cases}
0 & x \le -\epsilon \\
\textcolor{gray}{1 + (x/\epsilon)} & \textcolor{gray}{-\epsilon < x < 0} \\
1 & 0 \le x \\
\end{cases} \\
\mathsf{GEZero}_\epsilon.\ffwone &= \begin{bmatrix} 1 \\ 1 \end{bmatrix} & \mathsf{GEZero}_\epsilon.\mat{b}_1 &= \begin{bmatrix} \epsilon \\ 0 \end{bmatrix} \\
\mathsf{GEZero}_\epsilon.\ffwtwo &= \begin{bmatrix} 1/\epsilon & -1/\epsilon \end{bmatrix} & \mathsf{GEZero}_\epsilon.\mat{b}_2 &= 0
\end{aligned}
& \begin{tikzpicture}[baseline=0.5cm]
\draw[->] (0,0) -- (0,1.5);
\draw[<->] (-1.5,0) -- (1.5,0);
\draw[very thick] (-1,0) -- (-0.2,0);
\draw[very thick,gray!50] (-0.2,0) -- (0,1);
\draw[very thick] (0,1) -- (1,1);
\draw (0.1,1)--(-0.1,1) node[left] {$1$};
\draw (-0.2,0.1)--(-0.2,-0.1) node[below] {$-\epsilon$};
\end{tikzpicture} \\[4ex]
&\begin{aligned}
\mathsf{EqZero}_\epsilon(x) &= \begin{cases}
0 & |x| \ge \epsilon \\
\textcolor{gray}{1 -|x/\epsilon|} & \textcolor{gray}{0 < |x| < \epsilon} \\
1 & x = 0 \\
\end{cases} \\
\mathsf{EqZero}_\epsilon.\ffwone &= \begin{bmatrix} 1 \\ 1 \\ 1 \end{bmatrix} & \mathsf{EqZero}_\epsilon.\mat{b}_1 &= \begin{bmatrix} \pos\epsilon \\ \pos0 \\ -\epsilon \end{bmatrix} \\
\mathsf{EqZero}_\epsilon.\ffwtwo &= \begin{bmatrix} 1/\epsilon & -2/\epsilon & 1/\epsilon \end{bmatrix} & \mathsf{EqZero}_\epsilon.\mat{b}_2 &= 0
\end{aligned}
& \begin{tikzpicture}[baseline=0.5cm]
\draw[->] (0,0) -- (0,1.5);
\draw[<->] (-1.5,0) -- (1.5,0);
\draw[very thick] (-1,0) -- (-0.2,0);
\draw[very thick,gray!50] (-0.2,0) -- (0,1) -- (0.2,0);
\draw[fill,black] (0,1) circle (0.05);
\draw[very thick] (0.2,0) -- (1,0);
\draw (0.1,1)--(-0.1,1) node[left] {$1$};
\draw (-0.2,0.1)--(-0.2,-0.1) node[below] {\strut $-\epsilon$};
\draw (0.2,0.1)--(0.2,-0.1) node[below] {\strut $\epsilon$};
\end{tikzpicture}
\end{align}

We emphasize that the above constructions compute exact comparisons only for some inputs.
Inputs where the construction does not work as intended are shown in gray and are to be avoided.

Alternatively, if the desired threshold $\epsilon$ is input-dependent, and we are also willing to accept values as small as \(x \geq \epsilon\) as ``true'', then we may use the following \(\epsilon\)-parameterized constructions:

\begin{align}
&\begin{aligned}
\mathsf{GTZero}(x, \epsilon) &= \begin{cases}
0 & x \leq 0 \\
\textcolor{gray}{x} & \textcolor{gray}{0 < x < \epsilon} \\
\epsilon & \epsilon \leq x
\end{cases} \\
\mathsf{GTZero}.\ffwone &= \begin{bmatrix} 1 & \pos0 \\ 1 & -1 \end{bmatrix} & \mathsf{GTZero}_\epsilon.\mat{b}_1 &= \vec{0} \\
\mathsf{GTZero}_\epsilon.\ffwtwo &= \begin{bmatrix} 1 & -1 \end{bmatrix} & \mathsf{GTZero}_\epsilon.\mat{b}_2 &= 0
\end{aligned}
&\begin{tikzpicture}[baseline=0.5cm]
\draw[->] (0,0) -- (0,1.5);
\draw[<->] (-1.5,0) -- (1.5,0) node[right] {$x$};
\draw[very thick] (-1.5,0) -- (0,0);
\draw[very thick,gray!50] (0,0) -- (1,1);
\draw[very thick] (1,1) -- (1.5,1);
\draw (0.1,1)--(-0.1,1) node[left] {$\epsilon$};
\draw (1,0.1)--(1,-0.1) node[below] {$\epsilon$};
\end{tikzpicture} \\[4ex]
&\begin{aligned}
\mathsf{GEZero}(x, \epsilon) &= \begin{cases}
0 & x \le -\epsilon \\
\textcolor{gray}{x + \epsilon} & \textcolor{gray}{-\epsilon < x < 0} \\
\epsilon & 0 \le x \\
\end{cases} \\
\mathsf{GEZero}.\ffwone &= \begin{bmatrix} 1 & 1 \\ 1 & 0 \end{bmatrix} & \mathsf{GEZero}.\mat{b}_1 &= \vec{0} \\
\mathsf{GEZero}.\ffwtwo &= \begin{bmatrix} 1 & -1 \end{bmatrix} & \mathsf{GEZero}.\mat{b}_2 &= 0
\end{aligned}
& \begin{tikzpicture}[baseline=0.5cm]
\draw[->] (0,0) -- (0,1.5);
\draw[<->] (-1.5,0) -- (1.5,0);
\draw[very thick] (-1.5,0) -- (-1,0);
\draw[very thick,gray!50] (-1,0)--(0,1);
\draw[very thick] (0,1) -- (1.5,1);
\draw (0.1,1)--(-0.1,1) node[left] {$\epsilon$};
\draw (-1,0.1)--(-1,-0.1) node[below] {$-\epsilon$};
\end{tikzpicture} \\[4ex]
&\begin{aligned}
\mathsf{EqZero}(x, \epsilon) &= \begin{cases}
0 & |x| \ge \epsilon \\
\textcolor{gray}{\epsilon - \abs{x}} & \textcolor{gray}{0 < |x| < \epsilon} \\
\epsilon & x = 0
\end{cases} \\
\mathsf{EqZero}.\ffwone &= \begin{bmatrix} 1 & \pos1 \\ 1 & \pos0 \\ 1 & -1 \end{bmatrix} & \mathsf{EqZero}.\mat{b}_1 &= \vec{0} \\
\mathsf{EqZero}.\ffwtwo &= \begin{bmatrix} 1 & -2 & 1 \end{bmatrix} & \mathsf{EqZero}.\mat{b}_2 &= 0
\end{aligned}
& \begin{tikzpicture}[baseline=0.5cm]
\draw[->] (0,0) -- (0,1.5);
\draw[<->] (-1.5,0) -- (1.5,0);
\draw[very thick] (-1.5,0) -- (-1,0);
\draw[very thick,gray!50] (-1,0) -- (0,1) -- (1,0);
\draw[fill,black] (0,1) circle (0.05);
\draw[very thick] (1,0) -- (1.5,0);
\draw (0.1,1)--(-0.1,1) node[left] {$\epsilon$};
\draw (-1,0.1)--(-1,-0.1) node[below] {\strut $-\epsilon$};
\draw (1,0.1)--(1,-0.1) node[below] {\strut $\epsilon$};
\end{tikzpicture}
\end{align}

Similar to the input-independent constructions, input values where the computation is not exact are shown in gray and are to be avoided.

\subsection{Boolean Functions}\label{sec:ffnn_boolean}

    In this section, we show how to compute arbitrary Boolean functions using a single feed-forward network.
    We show the construction for $\text{true}=1, \text{false}=0$ (see \cref{sec:booleans}).
    This is probably the easiest case, but the others are not much more difficult.

    If we are not concerned about depth, the connectives $\land$, $\lor$, $\lnot$ can be computed by FFNNs with $\ReLU$ activations. Conjunction ($\land$) is equivalent to $\min$, disjunction ($\lor$) is equivalent to $\max$, and logical negation ($\lnot$) is just $(1-x)$.

    But we can also pack an arbitrary Boolean function $\phi \colon \{0,1\}^m \to \{0,1\}$ into a single two-layer FFN.
    The easiest (not necessarily the most efficient) way to do this is to list out all the possible inputs and outputs of $f$.
    There are $2^m$ possible truth assignments to the variables $x_1, \ldots, x_m$; number them $\xi_0, \xi_1, \ldots, \xi_{2^m-1} \in \{0,1\}^{m}$.
    That is, $[\xi_k]_i$ is the $i$-th bit of $k$, and $\vec{1} \cdot \xi_k$ is the number of variables that $\xi_k$ makes true.

\begin{equation}
\begin{aligned}
\mathsf{ff}_\phi.\ffwone &= 
\begin{bmatrix}
(2\xi_0-\vec{1})^\top \\
(2\xi_1-\vec{1})^\top \\
(2\xi_2-\vec{1})^\top \\
\vdots \\
(2\xi_{2^m-1}-\vec{1})^\top
\end{bmatrix}
= \begin{bmatrix}
-1 & \cdots & -1 & -1 \\
-1 & \cdots & -1 & \pos1 \\
-1 & \cdots & \pos1 & -1 \\
\pos\vdots & \ddots & \pos\vdots & \pos\vdots \\
\pos1 & \cdots & \pos1 & \pos1
\end{bmatrix} &
\mathsf{ff}_\phi.\ffbone &= 
\begin{bmatrix}
-\vec{1} \cdot \xi_0 + 1\\
-\vec{1} \cdot \xi_1 + 1\\
-\vec{1} \cdot \xi_2 + 1\\
\vdots \\
-\vec{1} \cdot \xi_{2^m-1} + 1
\end{bmatrix} 
= \begin{bmatrix}
1 \\
0 \\
0 \\
\vdots \\
-m+1
\end{bmatrix} \\
\mathsf{ff}_\phi.\ffwtwo &=
\begin{bmatrix}
\phi(\xi_0) & \cdots & \phi(\xi_{2^m-1})
\end{bmatrix} &
\mathsf{ff}_\phi.\ffbtwo &= \vec{0}.
\end{aligned}
\end{equation}

We want $h_k$ to test, for each truth assignment $\xi_k$, where $\vec{x} = \xi_k$. Consider the vectors $(2\xi_k-\vec{1})$ and $(2\vec{x}-\vec{1})$, whose entries are all $\pm1$: they are equal if and only iff their dot-product is $m$. So we want
\begin{align*}
h_k &= \ReLU\mleft(\begin{bmatrix}
\frac12(2\xi_0-\vec{1}) \cdot (2\vec{x}-\vec{1}) - (\frac{m}{2}-1) \\
\vdots \\
\frac12(2\xi_{2^m-1}-\vec{1}) \cdot (2\vec{x}-\vec{1}) - (\frac{m}{2}-1) \\
\end{bmatrix}\mright)
\\
&= \ReLU\mleft(\begin{bmatrix}
2\xi_0 \cdot \vec{x} - \vec{x}\cdot\vec{1} - \xi_0\cdot\vec{1} + 1 \\
\vdots \\
2\xi_{2^m-1} \cdot \vec{x} - \vec{x}\cdot\vec{1} - \xi_{2^m-1}\cdot\vec{1} + 1
\end{bmatrix}\mright) \\
&= \ReLU\mleft( 
\begin{bmatrix}
(2\xi_0-\vec{1})\cdot \vec{x} - \vec{1} \cdot \xi_0  + 1 \\
\vdots \\
(2\xi_{2^m-1}-\vec{1})\cdot \vec{x} - \vec{1} \cdot \xi_{2^m-1}  + 1
\end{bmatrix}
\mright) \\
&= \ReLU(\mathsf{ff}_\phi.\ffwone (\vec{x}) + \mathsf{ff}_\phi.\ffbone).
\end{align*}
Then the second layer tests if any truth assignment $\xi_k$ makes $\phi$ true: \begin{align*}   
y &= \sum_k \phi(\xi_k) \, h_k \\
&= \sum_k \phi(\xi_k) \, \ind{\xi_k = \vec{x}} \\
&= \phi(\vec{x}).
\end{align*}

\subsection{Conditionals}\label{sec:ffnn_conditional}

Suppose we want to compute the conditional expression
\begin{equation*}
    \cif{p}{x}{y} =
    \begin{cases}
        x & \text{if $p=1$} \\
        y & \text{if $p=0$.} 
    \end{cases}
\end{equation*}
We assume that $x, y \in [0,1]$, but the construction is easily generalized for any bounded interval. 
We adapt a construction used by \citet[Lemma 11]{perez-etal-2021-turing} and \citet[Theorem~1]{merrill-sabharwal-2024-cot}.
\begin{align*}
\mathsf{if}.\ffwone &= \begin{bmatrix}
\pos1 & 1 & 0 \\
-1 & 0 & 1
\end{bmatrix} & \mathsf{if}.\ffbone &= \begin{bmatrix} -1 \\ \pos0 \end{bmatrix} \\
\mathsf{if}.\ffwtwo &= \begin{bmatrix}
1 & 1
\end{bmatrix} & 
\mathsf{if}.\mat{b}_2 &= 0.
\end{align*}

Then    
\begin{align*}
\mathsf{if}\mleft(\begin{bmatrix} 1 \\ x \\ y \end{bmatrix}\mright) &= \ReLU(x) + \ReLU(y-1) = x \\
\mathsf{if}\mleft(\begin{bmatrix} 0 \\ x \\ y \end{bmatrix}\mright) &= \ReLU(x-1) + \ReLU(y) = y.
\end{align*}

\section{Self-Attention Layers}
\label{sec:attention}

Attention layers are the fundamental ingredient of transformers, allowing computations across positions in the sequence in a parallelizable manner \citep{vaswani-etal-2017-attention}. The original motivation for self-attention was to compute the relationships between source and target words in machine translation \citep{bahdanau2015neural}, but since then the mechanism has been trained to perform a huge variety of different tasks. In this section we primarily explain how attention can be used to retrieve information from different positions in specific ways. 

While attention layers were defined in \cref{def:attn}, there are additional design choices that can be made for the ease of implementing particular constructions, which we detail below.

\paragraph{Attention Masking}

While transformers are typically implemented using future-masked attention or with no masking, in \emph{past-masked attention}, we have \[ s_{ij} = \begin{cases} \displaystyle\frac{\qvec{i}^\top \kvec{j}}{\sqrt{\dkey}} & j \ge i \\
-\infty & \text{otherwise.}
\end{cases}
\]

\paragraph{Weighting Function}
The weighting function $\mathcal{S} \colon \R^+ \lpto \R^+$ computes the attention weights $\alpha_{i,*}$ based on the attention scores $s_{i,*}$.
A common choice is the \emph{softmax} function:
\[
\left[\softmax(s_1, \ldots, s_n)\right]_j = \frac{e^{s_j}}{\sum_{k=1}^{n} e^{s_k}}.
\]
But we consider several alternatives below.

\paragraph{Hard Attention}
In hard attention, the attention weights are assigned to focus all attention on the maximum-scoring position or positions.

\begin{definition}[Hardmax]
\label{def:hardmax}
The leftmost, rightmost, and average-hardmax functions are defined as follows. For any sequence of scores $\vec{s} = (s_1, \ldots, s_n)$, let
\begin{align*}
I(\vec{s}) &= \{ i \in [|\vec{s}|] \mid s_i = \max \vec{s} \}
\end{align*}
be the set of indices of maximal scores.
The $\lhardmax$ and $\rhardmax$ functions return a one-hot vector with a $1$ at the position of the leftmost or rightmost maximal element, respectively:
\begin{align*}
[\lhardmax(\vec{s})]_i &= \mathbb{I}[ i = \min I(\vec{s}) ] \\
[\rhardmax(\vec{s})]_i &= \mathbb{I}[ i = \max I(\vec{s}) ]. \\
\intertext{The $\avghardmax$ function pays equal attention to all the maximal elements:
}
[\avghardmax(\vec{s})]_i &= \frac{1}{|I(\vec{s})|} \mathbb{I}[ i \in I(\vec{s}) ].
\end{align*}
\end{definition}

Leftmost-hard attention was previously called
\emph{hard} attention by \citet{hahn-2020-theoretical} and
\emph{unique-hard} attention by \citet{hao-etal-2022-circuits}.
One may also consider rightmost-hard attention, in which the rightmost maximal element is used.
Average-hard attention was also called
\emph{hard} attention by \citet{perez-etal-2021-turing} and
\emph{saturated} attention by \citet{merrill-etal-2022-saturated-transformers}, and has been argued to be a realistic approximation to how trained transformers behave in practice \citep{merrill2020effects}.
Neither type of hard attention should be confused with the concept of hard attention used in computer vision \citep[e.g.,][]{xu+:2015}.

\subsection{Trivial Cases}

These trivial cases are the most basic ways an attention layer can process sequence-wise information: leave it unchanged, or aggregate them uniformly.

\subsubsection{Identity}\label{sec:identity-attention}

We start with the identity function,
\begin{align*}
  \mathsf{id} \colon \R^+ &\lpto \R^+ \\
  \mathsf{id}(\vec{x}) &= \vec{x}.
\end{align*}

As with FFNNs (\cref{sec:ffnn_identity}), the easiest way to compute the identity function is to use the residual connection, by setting the value vectors to zero:
\begin{align*}
  \mathsf{att\_zero}.\qproj &= \mat{0}
  &
  \mathsf{att\_zero}.\kproj &= \mat{0}
  &
  \mathsf{att\_zero}.\vproj &= \mat{0}
\end{align*}
Then, the residual connection will pass the input through unchanged.
\[ \mathsf{id}(\vec{x}) = \mathsf{att\_zero}(\vec{x}) + \vec{x}. \]

\subsubsection{Average}\label{sec:uniform-attention}

We can use attention to implement averaging of vectors or scalars, as follows:

\begin{align*}
  \mathsf{Avg}, \mathsf{Avg}_\leftarrow \colon (\R^d)^+ &\lpto (\R^d)^+ \\
  \mathsf{Avg}(\vec{x}_1, \vec{x}_2, \ldots, \vec{x}_n)_{i} &= \frac1n \sum_{k=1}^n \vec{x}_{k} \quad \text{for } i \in [N]  \\
  \mathsf{Avg}_\leftarrow(\vec{x}_1, \vec{x}_2, \ldots, \vec{x}_n)_{i} &= \frac1i \sum_{k=1}^i \vec{x}_{k} \quad \text{for } i \in [N] 
\end{align*}

We can make each position attend uniformly to all (unmasked) positions by setting the query and key matrices to zero.
\begin{align*}
  \mathsf{Avg}.\qproj &= \mat{0} &
  \mathsf{Avg}.\kproj &= \mat{0} &
  \mathsf{Avg}.\vproj &= \mat{I}
\end{align*}

And $\mathsf{Avg}_\leftarrow$ is defined the same way, but uses future masking.
For \(\mathsf{Avg}\), all positions receive the same value irrespective of \(i\).
For \(\mathsf{Avg}_{\leftarrow}\), we output the average over all previous positions.

\subsection{First}\label{sec:first}
We can construct an attention layer using positional encoding $(-1)^i$ and an FFN layer to implement a $\mathsf{first}$ predicate, which is $1$ at the first position and $0$ everywhere else.
This allows us to distinguish the first position and to compute the value $1/i$ in every position while only using the positional encoding $(-1)^i$, which are useful for technical constructions \citet{yang-etal-2025-simulating}.

\begin{align*}
\mathsf{first} \colon \R^+ &\lpto \R^+ \\
\mathsf{first} \mleft((-1)^1,(-1)^2, \ldots, (-1)^n \mright) &= (1,0,\ldots, 0).
\end{align*}

To mark the first position, we just need to use uniform future-masked attention.
\begin{align*}
  \mathsf{first}.\qproj &= \mat{0} &
  \mathsf{first}.\kproj &= \mat{0} &
  \mathsf{first}.\vproj &= -\mat{I}
\end{align*}
If $i = 1$, then only the first position receives attention, and then the output value $c_i$ is $1$. 
If $i > 1$, then the attention output $c_i$ is the average of $-1$'s and $1$'s, and so will always be at most $1/3$. 
Finally, we can map values in $[0,1/3]$ to $0$ via a comparison operation $\mathsf{GTZero}_{1/3}(c_i-1/3)$, slightly modifying the construction for $\mathsf{GTZero}$ by modifying the bias term and output sign (\cref{sec:comparison}).

\subsection{Index Lookup}\label{sec:table-lookup}

\newcommand{\lookupq}[1]{\macro{q_{#1}}}
\newcommand{\lookupv}[1]{\macro{v_{#1}}}

Theoretical constructions of transformers often require a query to focus on a single position in order to retrieve data from that position. 
In this section, we will discuss how one can perform this retrieval from specific positions via the attention mechanism.

Assume that at each position $i \in [n]$, we have:
\begin{itemize}
\item a query $q_i$, which is (an encoding of) a position, that is, $q_i \in [n]$,
\item (an encoding of) $i$ itself, and
\item a value $v_i$.
\end{itemize}
An \emph{index lookup} is a self-attention layer in which each position $i$ attends to position $\lookupq{i}$, making it possible to retrieve $\lookupv{\lookupq{i}}$ at each position $i$.

The following implementations of table lookup have a form similar to:
\begin{align*}
\mathsf{lookup} \colon (\R^3)^+ &\lpto \R^+ \\
\mathsf{lookup}\mleft(\begin{bmatrix} \lookupq{1} \\ 1 \\ \lookupv{1} \end{bmatrix}, \ldots, \begin{bmatrix} \lookupq{n} \\ n \\ \lookupv{n} \end{bmatrix} \mright) &= (\lookupv{\lookupq{1}}, \ldots, \lookupv{\lookupq{n}}).
\end{align*}
These implementations use average-hard attention; however, in \cref{sec:att_approx} we will discuss ways to make these implementations work with soft attention as well.

\begin{table}[t]
\centering
\small
\begin{tabular}{llcl}
\toprule
\textbf{Approach} & \textbf{Map} & \textbf{Width} & \textbf{Requirements} \\
\midrule
One-hot &
$(\R^{2N+1})^+ \lpto \R^+$ &
$\Theta(N)$ &
one-hot positional encodings \\

Almost-orthogonal &
\shortstack[l]{$(\R^{2m+1})^+ \lpto \R^+$} &
$\Theta(\log N)$ &
near-orthogonal positional vectors \\

Layernorm-hash &
$(\R^{9})^+ \lpto \R^+$ &
$\Theta(1)$ &
 Selective layernorm \\

Quadratic maximization &
$(\R^{5})^+ \lpto \R^+$ &
$\Theta(1)$ &
 positional features $j$ and $j^2$ where $j \in \mathbb{N}$ \\
\bottomrule
\end{tabular}
\caption{Summary of attention-based index-lookup implementations.}
\label{tab:lookup-summary}
\end{table}

\subsubsection{One-Hot Encodings}\label{sec:one-hot}

A na\"{i}ve approach to maximize attention weights at a desired position is to encode positions as one-hot vectors $\vec{e}_1,\ldots,\vec{e}_N\in\mathbb{R}^N$, up to a maximum length $N$.
\begin{align*}
\mathsf{ohLookup} \colon (\R^{2N+1})^+ &\lpto \R^+ \\
\mathsf{ohLookup}\mleft(\begin{bmatrix} \vec{e}_{\lookupq{1}} \\ \vec{e}_1 \\ \lookupv{1}  \end{bmatrix}, \ldots, \begin{bmatrix} \vec{e}_{\lookupq{n}} \\ \vec{e}_n \\ \lookupv{n} \end{bmatrix} \mright) &= (\lookupv{{\lookupq{1}}}, \ldots, \lookupv{{\lookupq{n}}}).
\end{align*}
The query, key, and value projections are set to 
\begin{align*}
    \qproj &= \begin{bmatrix} \mat{I}^{N\times N} &  \mat{0}^{N\times N} & \mat{0}^{N\times 1} \end{bmatrix} \\
    \kproj &= \begin{bmatrix} \mat{0}^{N\times N} &  \mat{I}^{N\times N} & \mat{0}^{N\times 1} \end{bmatrix} \\
    \vproj &= \begin{bmatrix} \mat{0}^{1\times N} &  \mat{0}^{1\times N} & 1 \end{bmatrix}
\end{align*}
so that, at any position $i$,
\begin{equation}
\qvec{i} = \qproj\left(\begin{bmatrix} \vec{e}_{\lookupq{i}} \\ \vec{e}_i \\ \lookupv{i}\end{bmatrix}\right)=\vec{e}_{\lookupq{i}}
\qquad 
\kvec{j} = \kproj\left(\begin{bmatrix} \vec{e}_{\lookupq{j}} \\ \vec{e}_j \\ \lookupv{j} \end{bmatrix}\right)=\vec{e}_j
\qquad
\vvec{j} = \vproj\left(\begin{bmatrix} \vec{e}_{\lookupq{j}} \\ \vec{e}_j \\ \lookupv{j} \end{bmatrix}\right)=\lookupv{j}. \label{eq:one_hot_qkv}
\end{equation}
Then the dot products satisfy $\qvec{i} \cdot \kvec{j}=\vec{e}_{\lookupq{i}} \cdot \vec{e}_j =\mathbb{I}[j=\lookupq{i}]$. 
With hard attention, this suffices to concentrate all attention on $j=q_i$. 
This realizes the lookup task with width $\Omega(N)$.

With soft attention, the lookup will be only approximate, but note that there is a minimum gap between the highest attention score and the next-highest attention score of $\gamma = 1$.
In \cref{sec:att_approx}, we will discuss how to magnify this gap enough to correct the approximation error.

\subsubsection{Almost Orthogonal Embeddings}\label{sec:almost_orthogonal}

\newcommand{\aogap}{\epsilon}

The one-hot approach forces the embedding width to be at least $N$. To mitigate this, we can use \emph{almost orthogonal} vectors \citep{bhattamishra2024separations,sanford2023representational,sanford2024understanding} to obtain width $O(\log N)$. A family of vectors $\vec{x}_1,\ldots,\vec{x}_N\in\mathbb{R}^m$ is almost orthogonal if, for some small $\aogap>0$,
\begin{align}\label{eq:near_ortho}
|\vec{x}_i \cdot \vec{x}_j|&\le\aogap \quad (i\neq j) &
\vec{x}_i \cdot \vec{x}_i &\ge 1-\aogap.
\end{align}
One can pack exponentially many, $\exp(\Omega(m))$, such vectors \citep[Chap. 3]{Vershynin_2018} into $m$ dimensions. A straightforward way to construct such vectors is: Given
\begin{itemize}
\item a maximum error $\aogap > 0$,
\item a maximum length $N$, and
\item a maximum probability of failure $\delta>0$,
\end{itemize}
set constant $k>0$ such that the probability of failure is $\frac{1}{N^k} \leq \delta $ and, take positive integer $m$ as,
\begin{align}
m &= \left\lceil \frac{12k}{\epsilon^2}\log(2N) \right\rceil = O\left(\log N \right). \\
\end{align}
 Then, if one samples $\vec{x}_1,\ldots,\vec{x}_N\in\{\pm 1/\sqrt{m}\}^m$ uniformly and independently, then with probability at most~$1 -\delta$, these vectors will satisfy \cref{eq:near_ortho}. 

The construction of these almost orthogonal vectors is also equivalent to taking the Johnson--Lindenstrauss (JL) transformations \citep{Johnson1984ExtensionsOL} of the $N$ one-hot vectors. To avoid storing $\Theta(N)$ vectors explicitly, one can use a derandomization of the JL lemma \citep{sivakumar2002algorithmic} to generate them in log-space. Such near-orthogonal families have been used in constructions for sparse averaging \citep{sanford2023representational}, string equality and nearest-neighbour algorithms \citep{bhattamishra2024separations}, and graph algorithms \citep{sanford2024understanding}.

Then we can define
\begin{align*}
\mathsf{aoLookup} \colon (\R^{2m+1})^+ &\lpto \R^+ \\
\mathsf{aoLookup}\mleft(\begin{bmatrix} \vec{x}_{\lookupq{1}} \\ \vec{x}_1 \\ \lookupv{1}  \end{bmatrix}, \ldots, \begin{bmatrix} \vec{x}_{\lookupq{n}} \\ \vec{x}_n \\ \lookupv{n} \end{bmatrix} \mright) &= (\lookupv{{\lookupq{1}}}, \ldots, \lookupv{{\lookupq{n}}}).
\end{align*}
where the family $\{\vec{x}_1,\ldots,\vec{x}_N\}$ satisfies \eqref{eq:near_ortho} with $m=O(\log N)$.

The query, key, and value projections are as in \cref{eq:one_hot_qkv},
so that for any position $i$, by \eqref{eq:near_ortho}, the attention dot products satisfy
\[
\qvec{i}\cdot \kvec{j}=\vec{x}_{\lookupq{i}}\cdot \vec{x}_j
\begin{cases}
\ge 1-\aogap & \text{if } j=\lookupq{i},\\[2pt]
\le \aogap & \text{if } j\neq \lookupq{i}.
\end{cases}
\]
Taking $\aogap=1/4$ for simplicity, we have $\qvec{i}\cdot \kvec{j}\ge 3/4$ for $j=\lookupq{i}$ and $\qvec{i}\cdot \kvec{j}\le 1/4$ for $j\neq \lookupq{i}$. Under hard attention, the output at position $i$ is then exactly $\lookupv{{\lookupq{i}}}$.

With soft attention, the attention weights are nonzero at all positions, so the retrieved value will only be approximate. 
The minimum gap between the attention scores $\qvec{i} \cdot \kvec{\lookupq{i}}$ and $\qvec{i} \cdot \kvec{j}$ for $j \ne \lookupq{i}$ is $\gamma = 1-2\aogap$.
In \cref{sec:att_approx}, we will discuss how to magnify this gap enough to correct the approximation error.
 
\subsubsection{Layernorm hash}
\label{sec:layernorm_hash}

Another way to implement table lookup is to encode positions by their \emph{layer-norm hash} \citep{merrill-sabharwal-2024-cot,yao-2021-self-attention}.
\Citet{merrill-sabharwal-2024-cot} use table lookup to simulate a Turing machine tape with chain-of-thought transformers: specifically, to retrieve the last value written to a previous index on the tape. \Citet{merrill2024little} use it to implement a binary tree construction with log-depth transformers.

\begin{align*}
\mathsf{lhLookup} \colon (\R^9)^+ &\lpto \R^+ \\
\mathsf{lhLookup}\mleft(\begin{bmatrix} \textnormal{lh}(\lookupq{1}) \\ \textnormal{lh}(1) \\ \lookupv{1} \end{bmatrix}, \ldots, \begin{bmatrix} \textnormal{lh}(\lookupq{n}) \\ \textnormal{lh}(n) \\ \lookupv{n} \end{bmatrix} \mright) &= (\lookupv{\lookupq{1}}, \ldots, \lookupv{\lookupq{n}}). \\
\mathsf{lhLookup}.\qproj &= \begin{bmatrix} \mat{I}^{4\times 4} & \mat{0}^{4\times 4} & \vec{0}^{4 \times 1}
\end{bmatrix} \\
\mathsf{lhLookup}.\kproj &= \begin{bmatrix} \mat{0}^{4\times 4} & \mat{I}^{4\times 4} & \vec{0}^{4 \times 1}
\end{bmatrix} \\
\mathsf{lhLookup}.\vproj &= \begin{bmatrix} \mat{0}^{1\times 4} & \mat{0}^{1\times 4} & 1
\end{bmatrix}.
\end{align*}
Let $\mathsf{LN}$ be a layer normalization with $\mathsf{LN}.\epsilon = 0$, $\mathsf{LN}.\beta = 0$, $\mathsf{LN}.\gamma = 1$.
\Citet{merrill-sabharwal-2024-cot}\footnote{\Citet{merrill-sabharwal-2024-cot} use a slightly different formula for $\mathsf{LN}$, but this only changes the construction by a factor of $2$.} store an integer $x$ as
\begin{align*}
  \textnormal{lh}(x) &= \mathsf{LN}\left(\begin{bmatrix} \pos x \\ \pos 1 \\ -x \\ -1 \end{bmatrix}\right)
  = \sqrt{\frac{2}{x^2 + 1}} \begin{bmatrix} \pos x \\ \pos 1 \\ -x \\ -1 \end{bmatrix}.
\end{align*}

The layernorm hash is scale-invariant in the sense that $\textnormal{lh}(kx) = \textnormal{lh}(x)$. So even if we're only able to compute $x/i$ and $1/i$ (as is common when counting positions using uniform attention), we can still compute $\textnormal{lh}(x)$ as \[ \textnormal{lh}(x) =  \mathsf{LN}\mleft(\begin{bmatrix}
\pos x/i \\
\pos 1/i \\
-x/i \\
-1/i
\end{bmatrix}\mright).\]

\iffalse
Imagine we want to retrieve some value from previous tokens based on a key, where the key is an integer.
To do this with attention, we want a representation $\phi(i)$ for any integer $i$ such that
\begin{equation} \label{eq:integer-key-retrieval}
    \phi(i) \cdot \phi(j) = 1 \iff i = j .
\end{equation}
Then, for any integers $q_i, k_j \in \mathbb N$, saturated attention with query $\phi(q_i)$, key $\phi(k_j)$ and value $v_j$ will return an average of the $v_j$'s where $q_i = k_j$, allowing us to implement retrieval based on an integer key.
\fi

If, for all $i>0$, we can compute queries and keys
\begin{align*}
  \vec{q}_i = \textnormal{lh}(\lookupq{i}) = \mathsf{LN}\mleft(\begin{bmatrix}
\pos \lookupq{i}/i \\
\pos 1/i,
-\lookupq{i}/i \\
-1/i
\end{bmatrix}\mright),\qquad
  \vec{k}_j = \textnormal{lh}(j) = \mathsf{LN}\mleft(\begin{bmatrix} \pos1 \\ \pos1/j \\ -1 \\ -1/j\end{bmatrix}\mright)
\end{align*}
then the dot product $s_{i,j} = \vec{q}_i \cdot \vec{k}_j = \textnormal{lh}(\lookupq{i}) \cdot \textnormal{lh}(j)$ is uniquely maximized when $\lookupq{i} = j$. Under hard attention, this allows us to attend only to position $j=\lookupq{i}$, after which the the value of $\lookupv{j}$ can be retrieved by appropriately setting $\vproj$.
But because the minimum gap between the score at the desired position $s_{i,\lookupq{i}}$ and the score at other positions decreases with $n$, a considerable amount of error accumulates when approximating ths construction using soft attention \cref{sec:att_approx}.  

One problem is that the residual stream may store other values besides the ones shown above, but standard pre-norm is applied to all values in the residual stream. %
Then $\textnormal{lh}$ will incorrectly be scaled by some factor that may be different at each position.
So we need the ability to selectively apply $\mathsf{LN}$ to just four components of a vector. We can do this if we use pre-norm \emph{and} modify the architecture by inserting a linear transformation $\nproj$ before the layer normalization:
\begin{align*}
  \mathbf{y} &= \mathsf{sa}(\mathsf{LN}(\nproj \mathbf{x})) + \mathbf{x}.
\end{align*}

\iffalse
    As shown by \citet{merrill-sabharwal-2024-cot}, we can implement $\phi$ with layer-norm as follows. First, define
\begin{equation*}
    \phi(x, y) = LayerNorm(\langle x, y, -x, -y \rangle) .
\end{equation*}
Then, we can define the layer-norm hash of a single argument $x$ as
\begin{equation*}
    \phi(x) = \phi(x, 1)
\end{equation*}
This satisfies the desired property in \eqref{eq:integer-key-retrieval} and also satisfies the useful property that $\phi(x, y) = \phi(x/y) = \arctan(x/y)$.
\fi

The linear transformation $\nproj$, which can be restricted to a diagonal matrix if desired, can mask out information encoded in the residual stream that is not relevant at this layer, allowing the network to compute the layer-norm hash of a specific value.

\subsubsection{Quadratic Maximization}\label{sec:quadratic-maximization}

\Citet{barcelo-etal-2024-logical} include $j$ and $j^2$ in the position embedding, allowing table lookup as follows:
\begin{align*}
\mathsf{qmLookup} \colon (\R^5)^+ &\lpto \R^+ \\
\mathsf{qmLookup}\mleft(\begin{bmatrix} \lookupq{1} \\ 1 \\ 1 \\ 1^2 \\ \lookupv{1} \end{bmatrix}, \begin{bmatrix} \lookupq{2} \\ 1 \\ 2 \\ 2^2 \\ \lookupv{2} \end{bmatrix}, \ldots, \begin{bmatrix} \lookupq{n} \\ 1 \\ n \\ n^2 \\ \lookupv{n} \end{bmatrix} \mright) &= (\lookupv{\lookupq{1}}, \lookupv{\lookupq{2}}, \ldots, \lookupv{\lookupq{n}}). \\
\mathsf{qmLookup}.\qproj &= \begin{bmatrix}
1 & 0 & 0 & 0 & 0 \\
0 & 1 & 0 & 0 & 0
\end{bmatrix} \\ \mathsf{qmLookup}.\kproj &= \begin{bmatrix} 
0 & 0 & 2 & \pos 0 & 0 \\
0 & 0 & 0 & -1 & 0
\end{bmatrix} \\
\mathsf{qmLookup}.\vproj &= \begin{bmatrix} \mat{0}^{1\times 4} & 1
\end{bmatrix}.
\end{align*}
Then the queries and keys are
\begin{align*}
  \vec{q}_i = \begin{bmatrix} \lookupq{i} \\ 1 \end{bmatrix}, \qquad
  \vec{k}_j = \begin{bmatrix} 2j \\ -j^2 \end{bmatrix}
\end{align*}
Their dot product $s_{i,j}$ is $2\lookupq{i}j - j^2$, which is uniquely maximized when $j=\lookupq{i}$.
This is true even if either $q_i$ or $k_j$ is scaled by some factor (for example, $1/i$ or $1/n$). Under hard attention, this solves the lookup problem exactly.

Under soft attention, the lookup is only approximate. But the minimum gap between the score at the desired position $s_{i,\lookupq{i}}$ and the score at any other position $s_{i,j}$ ($j \ne \lookupq{i}$) is $\gamma=1$, and we will discuss in \cref{sec:att_approx} how to magnify this gap enough to correct the approximation error.

\subsection{Predecessor}\label{sec:predecessor}

The predecessor function is the special case of index lookup where each position $i$ attends to position $i-1$.
The index lookup methods of \cref{sec:table-lookup} can be used to do this; in particular, using quadratic maximization (\cref{sec:quadratic-maximization}), $\kproj$ and $\qproj$ can be set so that $s_{i,j}$ is $2(i-1)j - j^2$.

In this section, we present an alternative construction that uses a simpler position encoding. We assume that the values $\lookupv{i}$ are bounded to $[0,1]$.

\begin{align*}
\mathsf{pred} \colon (\R^2\times [0,1])^+ &\to [0,1]^+ \\
\mathsf{pred}\mleft(\begin{bmatrix} 1\\(-1)^1\\ \lookupv{1}\end{bmatrix}, \begin{bmatrix} 1\\(-1)^2\\ \lookupv{2}\end{bmatrix}, \ldots, \begin{bmatrix} 1\\(-1)^n\\ \lookupv{n}\end{bmatrix} \mright) &= (0, \lookupv{1},\ldots, \lookupv{n-1})).
\end{align*}

This can be achieved by rightmost $\UHAT$ with positional encoding $(-1)^i$.
We compute predecessor at positions $i>1$ by making each position $i$ attend only to position $(i-1)$. 
If $i=1$, we output $0$.
If $i$ is odd (and greater than $1$), we make the attention scores greater at even positions, so that rightmost tie-breaking selects position $(i-1)$. If $i$ is even, we make the attention scores greater at odd positions.

To do this, we use two future-masked attention layers (or one layer with two heads). The first is:
\begin{align*}
  \mathsf{pred}.\mathsf{odd}.\qproj &= \begin{bmatrix}
      1 &
      0 &
      0
  \end{bmatrix}&
  \mathsf{pred}.\mathsf{odd}.\kproj &= \begin{bmatrix}
      0 &
      1 &
      0
  \end{bmatrix}&
  \mathsf{pred}.\mathsf{odd}.\vproj &= \begin{bmatrix}
      0 &
      0 &
      1
  \end{bmatrix}
\end{align*}

If $i$ is odd, the result of $\mathsf{pred}.\mathsf{odd}$ is $\lookupv{i-1}$. 
The second attention head is defined similarly:
\begin{align*}
  \mathsf{pred}.\mathsf{even}.\qproj &= \begin{bmatrix}
      1 &
      0 &
      0
  \end{bmatrix}&
  \mathsf{pred}.\mathsf{even}.\kproj &= \begin{bmatrix}
      0 &
      -1 &
      0
  \end{bmatrix}&
  \mathsf{pred}.\mathsf{even}.\vproj &= \begin{bmatrix}
      0 &
      0 &
      1
  \end{bmatrix}
\end{align*}
Now if $i$ is even, the result of $\mathsf{pred}.\mathsf{even}$ is $\lookupv{i-1}$. 

At each position $i$, we can check if $i=1$ using the construction in \cref{sec:first}, and we can check if $i$ is even using $\mathsf{GTZero}_1((-1)^i)$. 
After that, because the $\lookupv{j}$ are bounded, we can use a conditional (\cref{sec:ffnn_conditional}) in order to select the correct result either from $\mathsf{pred}.\mathsf{even}$ or $\mathsf{pred}.\mathsf{odd}$:
\[ \mathsf{pred}(\sain_1, \ldots \sain_n) = \mathsf{if}(\mathsf{GTZero}_1((-1)^i), \mathsf{pred}.\mathsf{even}(\sain_1, \ldots \sain_n), \mathsf{pred}.\mathsf{odd}(\sain_1, \ldots \sain_n)). \]

This construction is similar to the one in \citet{barcelo-etal-2024-logical} using $\AHAT$, and we note that \citet{yang-etal-2025-simulating} demonstrate how this can be simulated using $\SMAT$, if the values are bounded.

\iffalse
For a fixed $i$, the attention at position $j = i + 1$ can be maximised, by
constructing the word vectors as in Equation~\ref{eq:maximising-attention} and applying a linear transformation, $\mat{A} \colon \mathbb{R}^3 \to \mathbb{R}^3$, that reverses the third coordinate \citep{barcelo-etal-2024-logical}.

\begin{align}
    \label{eq:maximising-attention}
    \mathbf{v}_i &= \left(\cos\left(\frac{\pi(1 - 2^{-i})}{10}\right),  \sin\left(\frac{\pi(1 - 2^{-i})}{10}\right), (-1)^i\cdot 10 \right)\\
    \mat{A}\mathbf{v}_i \cdot \mathbf{v}_j &=\cos\left(\frac{\pi(2^{-i}-2^{-j})}{10}\right) + (-1)^{i + j + 1}\cdot 10
\end{align}

This could also have been achieved using quadratic maximization, as in~\Cref{sec:quadratic-maximization}. However, this specific construction only requires periodic positional encodings, rather than unbounded ones. \CW{Should we describe what positional encodings are needed at the start of this construction?}

\fi

If we choose to use strict future-masking, then predecessor look-up becomes considerably easier.
This is because for any position \(i\), the attention weight will only be non-negative at positions \(j < i\).
In particular, it suffices to use a positional encoding of \(i/n\) along with \(\UHAT\).
Then, letting the attention matrices act as linear projections on the \(i/n\) positional encoding, the \(i-1\) position will automatically be selected by \(\UHAT\) as it will be the largest among the sequence \(\UHAT(1/n, \ldots, (i-1)/n, 0, \ldots, 0)\), where note that the value \(i/n\) is not present because the attention is strictly future-masking.
As a special case, we may let the first position \((i = 1)\) select itself, or return a special value.

\subsection{Simulating Multi-Head Attention with Single-Head Attention}\label{sec:multi-head}

The result of a multi-headed attention layer with $H$ heads with $\dhid$ key/value dimensions per head is the concatenation of resulting vectors from $H$ separate attention heads. In this way, an attention layer with $H$ heads can be simulated by $H$ single-headed attention layers, and the results summed together at the end using \cref{sec:ffnn_addition}.

\subsection{Simulating Unique-Hard Attention with Average-Hard Attention}\label{sec:tie-breaking}

When using \emph{average-hard} attention, sometimes we want to be able to simulate \emph{leftmost unique-hard} or \emph{rightmost unique-hard} attention. To do this, we have to add a \emph{tie-breaking} term to the attention scores that decreases or increases (respectively) with the position. However, care must be taken not to make the tie-breaking term so big that a non-maximal attention score becomes the maximal attention score.

Suppose that we have attention scores computed from queries and keys as follows:
\begin{align*}
s_{ij} = \frac{\qvec{i} \cdot \kvec{j}}{\sqrt{\dkey}}
\end{align*}
and we know that there exists $\gamma > 0$ such that for all $i, j$, either
\begin{equation*}
s_{ij} = \max_{j'} s_{ij'} \quad\text{or}\quad s_{ij} < \max_{j'} s_{ij'} - \gamma.
\end{equation*}
Then we can add a tie-breaking term
\begin{align*}
\hat{\vec{q}}_i &= \begin{bmatrix}
\qvec{i} \\
\gamma
\end{bmatrix} &
\hat{\vec{k}}_j &= \begin{bmatrix}
\kvec{j} \\
t(j)
\end{bmatrix}
\end{align*}
where $t(j)$ is one of
\begin{equation*}
    t(j) = \begin{dcases}
        -\frac{1}{j}, & \text{rightmost} \\
        \frac{1}{j}, & \text{leftmost} \\
        \frac{j}{n}, & \text{rightmost} \\
        -\frac{j}{n}, & \text{leftmost}
    \end{dcases}
\end{equation*}

\subsection{Simulating Average-Hard Attention with Soft Attention}
\label{sec:att_approx}

Many of the above constructions are exact under average-hard attention but only approximate under soft attention. 
In particular, to perform index-lookup operations, attention has to be concentrated on the single query position. 
With hard attention, 100\% of the score can be concentrated on the query position, with soft attention this is not possible, because every position receives positive attention. 
This causes error when approximating hard attention using soft attention. 

The amount of error is based on the gap $\gamma$ between the highest attention score and all other attention scores.
We have noted, when appropriate, what the minimum gap $\gamma$ will be in many constructions.
In this section, we discuss how to correct this error if $\gamma$ is known. 

One way to correct the approximation error is using limited-precision arithmetic. If we multiply the query vectors by a factor $1/\tau$ large enough that $\exp -\gamma/\tau$ rounds to zero, then $\softmax$ becomes exactly equal to $\avghardmax$ and soft attention becomes equivalent to average-hard attention.

When we want to use unbounded precision, we will also need the factor $1/\tau$ to depend on the sequence length. 
In index-lookup operations the score is always highest on a single position, a condition which we give a special name to. 

\begin{definition} \label{def:tieless}
For any vector $\vec{s} \in \R^+$, let $M = \max_i s_i$.
We say that $\vec{s}$ is \emph{tieless} if $|\{i \mid s_i = M\}| = 1$, and $\vec{s}$ has \emph{gap $\gamma$} if for all $i$ such that $s_i \ne M$, we have $s_i \le M-\gamma$.
\end{definition}

If $\vec{s}$ is tieless, then $\lhardmax(\vec{s}) = \rhardmax(\vec{s}) = \avghardmax(\vec{s})$, and we write $\hardmax(\vec{s})$ for all three.

\begin{lemma}[{\citealt[Lemma B.7]{edelman2022inductive}}] \label{thm:edelman}
Let $\vec{s} = (s_1, \ldots, s_n)$ be attention scores and let $j^* \in [n]$ and $\gamma > 0$ be such that for all $j \ne j^*$ we have $s_{j} < s_{j^*} - \gamma$. Then
\[ \|\hardmax(\vec{s}) - \softmax(\vec{s})\|_1 \le 2ne^{-\gamma}.\]
\end{lemma}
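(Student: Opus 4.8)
The whole argument is a short computation, and the plan is to reduce the $L^1$ distance to a single scalar quantity and then bound it. First, observe that since $\vec{s}$ is tieless with its unique maximum at $j^*$ (the hypothesis $s_j < s_{j^*} - \gamma$ for all $j \ne j^*$ forces this), $\hardmax(\vec{s})$ is exactly the one-hot vector $\vec{e}_{j^*}$. Writing $p_j = [\softmax(\vec{s})]_j$, so that $p_j > 0$ and $\sum_j p_j = 1$, I would split the norm by isolating the coordinate $j^*$:
\[
\norm{\hardmax(\vec{s}) - \softmax(\vec{s})}_1 = \lvert 1 - p_{j^*}\rvert + \sum_{j \ne j^*} p_j = 2\sum_{j \ne j^*} p_j,
\]
where the last step uses $1 - p_{j^*} = \sum_{j \ne j^*} p_j$. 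So everything comes down to bounding the total off-peak softmax mass.

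Next I would bound each off-peak weight individually. For $j \ne j^*$, keeping only the $j^*$ term in the denominator gives
\[
p_j = \frac{e^{s_j}}{\sum_{k} e^{s_k}} \le \frac{e^{s_j}}{e^{s_{j^*}}} = e^{s_j - s_{j^*}} < e^{-\gamma}
\]
by the hypothesis. Summing over the $n-1$ indices $j \ne j^*$ yields $\sum_{j \ne j^*} p_j < (n-1) e^{-\gamma} \le n e^{-\gamma}$, and combining with the display above gives $\norm{\hardmax(\vec{s}) - \softmax(\vec{s})}_1 < 2(n-1) e^{-\gamma} \le 2n e^{-\gamma}$, which is the claimed bound (in fact slightly stronger).

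There is no genuine obstacle here; the only point that takes a moment of care is the algebraic identity in the first step — recognizing that the error contributed at the peak coordinate $j^*$ exactly equals the total error spread across the off-peak coordinates, which is what produces the factor of $2$. Everything else is the elementary estimate $e^{s_j}/\sum_k e^{s_k} \le e^{s_j - s_{j^*}}$ applied termwise.
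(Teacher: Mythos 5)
Your proof is correct. The paper itself does not prove this lemma—it simply cites Lemma B.7 of Edelman et al.\ (2022)—so there is no in-paper proof to compare against, but your argument is a complete, self-contained derivation of the cited bound, and it follows the standard route one would expect: identify $\hardmax(\vec{s})$ with $\vec{e}_{j^*}$, rewrite the $L^1$ distance as $2\sum_{j \ne j^*} p_j$ via normalization, then bound each off-peak weight by $e^{s_j - s_{j^*}} < e^{-\gamma}$ after dropping all but the $j^*$ term from the partition function. Your version in fact gives the slightly sharper constant $2(n-1)$, which the paper's statement relaxes to $2n$.
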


When we do not want dependence on the sequence length, we can assume that $v_i \in \{0,1\}$ for all $i \in [n]$, and that there is a maximum length $N$. The idea is to keep the error small enough that we can round the retrieved valued correctly to either $0$ or $1$.

Let ${\qproj}' = (1/\tau) \qproj$, where $\tau = \gamma/(\log 8N)$. 
If we compute attention scores using ${\qproj}'$ instead of $\qproj$, we get $\qvec{i}' \cdot \kvec{j} = (1/\tau) \qvec{i} \cdot \kvec{j}$, and the minimum gap is now $\log 8N$ instead of $\gamma$.
Let $\alpha_1,\ldots,\alpha_N\in[0,1]$ be the resulting attention weights at position $i$. The output of attention at position~$i$ is
\begin{align*}
\saout_i &= \sum_{j=1}^{N} \alpha_j \vvec{j} = \alpha_{\lookupq{i}} \lookupv{{\lookupq{i}}}  + \sum_{j\neq \lookupq{i}} \alpha_j \lookupv{j} \\
|\saout_i - \lookupv{\lookupq{i}}| %
&\le (1-\alpha_{\lookupq{i}}) + \sum_{j\neq \lookupq{i}} \alpha_j \\\
&\le 2Ne^{-\gamma/\tau} && \tag{by \cref{thm:edelman}} \\
&= \frac14.
\end{align*}
So $\saout_i \le 1/4$ if $\lookupv{\lookupq{i}}=0$ and $\saout_i \ge 3/4$ if $\lookupv{\lookupq{i}}=1$ \citep[Thm.~1]{bhattamishra2024separations}. 
Thus, the value can be rounded to $0$ or $1$ using a 2-layer \ReLU{} FFN, namely, $\mathsf{GTZero_{1/2}}(\saout_i-1/4)$.

Sometimes we can do better; in particular, \citet{yang-etal-2025-simulating} show how this can be done with quadratic maximization without assuming a maximum length $N$.

\iffalse
\begin{lemma} 
    \label{lem:softmax_bound_constant}
    Suppose $\vec{s} = (s_1, \ldots, s_n)$ is such that there is a $j^* \in [n]$ such that for all $j \in [n]$, $s_{j} \le s_{j^*} - |j-j^*|\gamma$. Then \[ \norm{\hardmax(\vec{s}) - \softmax(\vec{s})} \le 4e^{-\gamma}.\]
\end{lemma}
\begin{proof}
See \cref{sec:softmax_bound_constant_proof}.
\end{proof}

Subsequently, we will make use of the \emph{table lookup} operation: given $c_i \in [n]$ for $i \in [n]$, each position $i$ attends solely to position $j$ such that $j = c_i$.
This is possible using hard attention, but in soft attention, we must approximate it.
The following lemma helps quantify the approximation error.

\begin{lemma}
    \label{table-lookup-approx}
    Let $\tlscale \ge 1$ and let $i$ and $c$ be integers such that $1 \le c \le i$. For $j = 1,\ldots,i$, define $s_{j} =\tlscale (2 c j - j^2)$ and let $\vec{\alpha} = \softmax \vec{s}$.
    Then
    \[
    \left|  c - \sum_{j=1}^{i} \alpha_j j  \right| \le \tfrac32e^{-\tlscale}.
    \]
\end{lemma}

\begin{proof}
See \cref{sec:table-lookup-approx-proof}.
\end{proof}

\fi

\section{Layer Normalization}
\label{sec:layernorm}

Layer normalization~\citep{ba2016layer}, or layernorm for short, is a normalization technique originally proposed for the purpose of reducing training time.

\begin{definition}[layer normalization,~\citealp{ba2016layer}]
A \emph{layer normalization} is a function
\begin{align*}
    \mathsf{LN} \colon \R^d &\to \R^d \\
    \mathsf{LN}(x) &= \frac{x_i - \mu}{\sqrt{\sigma^2 + \mathsf{LN}.\varepsilon}} \odot \mathsf{LN}.\gamma + \mathsf{LN}.\beta \\
    \mu &= \frac{1}{d} \sum_{i = 1}^{d} x_i \\
    \sigma^2 &= \frac{1}{d}\sum_{i = 1}^{d} (x_i - \mu)^2
\end{align*}
where $\mathsf{LN}.\gamma, \mathsf{LN}.\beta \in \R^d$ and \(\mathsf{LN}.\varepsilon > 0\) is a small constant for numerical stability.
\end{definition}

The original transformer model~\citep{vaswani-etal-2017-attention} used layer normalization (or layernorm for short) after each sublayer and residual connection.
It has since been shown to have an impact on the expressiveness of the model, for example, by affecting the Lipschitz continuity of the model~\citep{hahn-2020-theoretical} and the ability to express certain attention patterns~\citep{brody2023expressivity}.
In theoretical work, layernorm in some cases may complicate the proof, and in other cases may be an essential component of the proof.
In this section, we discuss the theoretical aspects of layer normalization and how the literature on expressivity proofs has treated it.

\subsection{Relevant Properties}

The use of layer normalization can affect the sensitivity of a network with respect to its inputs. 
Understanding how sensitive a network is to small perturbations in its input is crucial for analyzing its stability and generalization.
The property of Lipschitz continuity provides a formal way to bound this sensitivity, and is useful in analysis in adversarial robustness~\citep{zuhlke2025adversarial}, generalization~\citep{bartlett2017spectrally}, and expressivity~\citep{chiang+:icml2023}.

\begin{definition}[$k$-Lipschitz Continuity]
A function $f$ is \emph{$k$-Lipschitz continuous} if for all $\vec{x}_1$, $\vec{x}_2$, $\|f(\vec{x}_1) - f(\vec{x}_2)\| \le k \|\vec{x}_1 - \vec{x}_2\|$.
\end{definition}

In simple terms, a $k$-Lipschitz continuous function cannot change arbitrarily fast.
Since every operation in an FFN (matrix multiplication, addition, and the ReLU activation function) is itself Lipschitz continuous, their composition is as well.
We discuss below how layernorm may break the Lipschitz continuity of a network.

Another important property, particularly for FFNs without bias terms, is how the function's output scales with the magnitude of its input.
This scaling behavior is captured by the concept of positive homogeneity.

\begin{definition}[Positive $k$-homogeneity]
A function $f$ is \emph{positively $k$-homogeneous} for all $\vec{x}$ and $c \ge 0$, $f(c\vec{x}) = c^k f(\vec{x})$.
\end{definition}

This definition states that scaling the input vector by a non-negative constant $c$ results in the output being scaled by $c^k$.
For a standard FFN with ReLU activations and no bias terms (i.e., $\mathsf{ff}.\ffbone = \mathsf{ff}.\ffbtwo = \vec{0}$), the function is positively 1-homogeneous (or ``linear'' in its scaling behavior). 
This is because $\ReLU(cx) = c \cdot \ReLU(x)$ for all $c \geq 0$, and this property is preserved through the linear transformations of the network.
If every layer of a network is $1$-homogeneous and the output is scale-invariant, then layernorm will not affect the network.

\subsection{Post-Norm vs.~Pre-Norm}

The original definition of the transformer used what is now known as a ``post-norm'' architecture, where layer normalization is applied after each residual connection:
\[\vec{y} = \mathsf{LN}(f(\vec{x})+\vec{x})\]
where $f$ is either a self-attention or FFN.

This is in contrast to the ``pre-norm'' architecture, where layer normalization is applied before each residual connection:
\[\vec{y} = f(\mathsf{LN}(\vec{x})) + \vec{x}. \]
Additionally, the final layer is followed by one more layer normalization.

The pre-norm architecture is currently the standard. This is because the post-norm architecture has problems with exploding gradients, making training unstable, which the pre-norm architecture does not~\citep{xiong2020layer}. 
Some constructions, however, still use post-norm \citep{chiang-cholak-2022-parity,chiang+:icml2023,hahn-rofin-2024-sensitive,yang2024counting}. 

\subsection{Circumventing Layer Normalization}
\label{sec:layernorm_circumvent}

Some theoretical constructions simply omit layernorm, and some constructions don't do anything useful with it, but simply try to circumvent it. We can't circumvent it completely; the best we can do is the following.

For any layernorm $\mathsf{LN}$ with $\mathsf{LN}.\beta = \vec{0}$, for all $\vec{x}$,
\begin{align*}
\mathsf{LN}\mleft(\begin{bmatrix}
\pos\vec{x} \\ -\vec{x}
\end{bmatrix} \mright) &=  \begin{bmatrix}
\pos c\vec{x} \\ -c\vec{x}
\end{bmatrix}
\end{align*}
for some $c$.
That is, if the components of a vector come in pairs that are additive inverses of each other, then it has zero mean, so the layernorm $\mathsf{LN}$ can only scale the vector.

\subsection{Amplification}
\label{sec:layernorm_amplify}

\Citet{hahn-2020-theoretical} showed that if a transformer's position-wise operations are all Lipschitz-continuous and its position embeddings are bounded, then it also has the Lipschitz-like property that a change in a single input symbol can only produce a change of $O(1/n)$ in any output activation.

But layer normalization $\mathsf{LN}$ is Lipschitz-continuous only if $\mathsf{LN}.\epsilon > 0$; if (as originally defined) $\mathsf{LN}.\epsilon = 0$, then $\mathsf{LN}$ is not Lipschitz-continuous, possibly allowing the model to express more complex functions. For example, \citet{yang2024counting} compare two numbers by subtracting them; if their difference is very small, they use layernorm to amplify it to $\pm1$.

The challenge is that layernorm cannot be used only on selected components. So to take a value as small as $\pm\delta$ and amplify it to $\pm1$, we have to clip \emph{all} components to be in $\{-\delta, \delta\}$.
Let $\vec{x} \in \R^d$ be a vector with $|x_c| \ge \delta$ for all $c \in [d]$. 
We also assume that $\vec{x}$ has zero mean.
Using the construction in \cref{sec:ffnn_cpwl}, we can construct a FFN that clips all activations to $\pm\delta$.
\begin{align*}
\mathsf{clip}_\delta \colon \R &\to \R \\
\mathsf{clip}_\delta.\ffwone &= \begin{bmatrix}
1 \\ 1
\end{bmatrix} &
\mathsf{clip}_\delta.\ffbone &= \begin{bmatrix}
\delta \\ -\delta
\end{bmatrix} \\
\mathsf{clip}_\delta.\ffwtwo &= \begin{bmatrix}
\pos1 & -1
\end{bmatrix} &
\mathsf{clip}_\delta.\ffbtwo &= \begin{bmatrix}
-\delta
\end{bmatrix}.
\end{align*}
Applying layernorm after this FFN will normalize all activations to $\pm1$.
\begin{align*}
\mathsf{LN} \colon \R &\to \R \\
\mathsf{LN}.\epsilon &= 0 \\
\mathsf{LN}.\beta &= 0 \\
\mathsf{LN}.\gamma &= 1.
\end{align*}

\section{Rounding and Approximation}
\label{sec:rounding}

\subsection{Rounding}

Transformers as defined in \cref{sec:preliminaries} operate over real-valued activations, which sometimes comes in tension with the discrete tasks we expect them to do, such as simulating finite automata or logical reasoning.
One way to address this is to enforce rounding to fixed-precision. 
This may not be cause for objection due to the fact that transformers are implemented using fixed-precision in practice (say 32-bit floating point for example). 
In this section, we describe how rounding can be used as a mechanism, and how it may be handled via approximation.

\subsubsection{Comparisons}\label{sec:comparison_rounding}

In \cref{sec:comparison} we saw that a FFN can simulate a step function up to a fixed tolerance $\epsilon$. 

\begin{center}
    \begin{tikzpicture}[baseline=0.5cm]
    \draw[->] (0,0) -- (0,1.5);
    \draw[<->] (-1.5,0) -- (1.5,0);
    \draw[very thick] (-1.5,0) -- (-1,0) -- (0,1) -- (1.5,1);
    \draw (0.1,1)--(-0.1,1) node[left] {$\epsilon$};
    \draw (-1,0.1)--(-1,-0.1) node[below] {$-\epsilon$};
    \end{tikzpicture}
\end{center}

If values smaller than $\epsilon$ were rounded down to $0$, this would permit exact simulation of a step function. 

\begin{center}
    \begin{tikzpicture}[baseline=0.5cm]
    \draw[->] (0,0) -- (0,1.5);
    \draw[<->] (-1.5,0) -- (1.5,0);
    \draw[very thick] (-1.5,0) -- (0,0); 
    \draw[very thick] (0,1) -- (1.5,1);
    \draw (0.1,1)--(-0.1,1) node[left] {$\epsilon$};
    \draw (-1,0.1)--(-1,-0.1) node[below] {$-\epsilon$};
    \end{tikzpicture}
\end{center}

This could be scaled up to perform comparisons exactly.

\subsection{Error Analysis}

While a transformer may sometimes be unable to exactly implement a discrete operation, it can often approximate the operation very closely.
When a transformer construction makes such an approximation, we can bound the error using the following results.
Below, $\|\mathord\cdot\|$ around matrices is the operator norm. For simplicity, one can use the $L_{\infty,1}$ norm instead,
\[ \|\mat{A} \|_{\infty,1} = \sum_i \max_j \left|\mat{A}\elt{i,j}\right|. \]

\begin{proposition}[Bounded activations, \citealt{hahn-2020-theoretical}, \citealt{chiang+:icml2023}]
Let $\mathsf{tf}$ be a transformer with $\mathsf{tf}.\mathsf{pe}_n(i) \le P$ for all $n$ and $i \in [n]$. Even if\/ $\mathsf{tf}$ contains a layer normalization $\mathsf{LN}$ with $\mathsf{LN}.\epsilon = 0$, there is an $X$ such that for all $\ell$ and $i$, we have $\|\tlio^{(\ell)}_i\| \le X$.
\end{proposition}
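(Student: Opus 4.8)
The plan is to prove the bound by induction on the layer index $\ell$, exploiting that \emph{every} component appearing in $\mathsf{tf}$ — the input embedding, a self-attention sublayer with its residual connection, a feed-forward sublayer with its residual connection, and any interleaved layer normalization (pre-norm or post-norm) — is \emph{bounded-input bounded-output}: if all incoming activations have norm at most some $X_{\ell-1}$, then that component outputs activations of norm at most some $X_\ell$ that depends only on $X_{\ell-1}$ and on the (fixed, finitely many) parameters of that component. Since a given transformer has a fixed depth $L$ and a fixed embedding, setting $X = \max_{0 \le \ell \le L} X_\ell$ then yields the claim. This is essentially the argument of \citet{hahn-2020-theoretical} and \citet{chiang+:icml2023}, the only new wrinkle being that a layer normalization with $\mathsf{LN}.\epsilon = 0$, although no longer Lipschitz, is still bounded-output.

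For the base case, $\tlio^{(0)}_i = \mathsf{tf}.\mathsf{we}(x_i) + \mathsf{tf}.\mathsf{pe}_n(i)$; the word embedding ranges over the finite set $\{\mathsf{tf}.\mathsf{we}(a) : a \in \Sigma\}$ and the position embedding satisfies $\|\mathsf{tf}.\mathsf{pe}_n(i)\| \le P$ by hypothesis, so $\|\tlio^{(0)}_i\| \le \max_{a \in \Sigma}\|\mathsf{tf}.\mathsf{we}(a)\| + P =: X_0$. For the inductive step I treat the three kinds of maps. (i) \emph{Self-attention:} the output $\saout_i = \sum_{j} \alpha_{i,j}\vvec{j}$ is a convex combination of the value vectors $\vvec{j} = \mathsf{sa}.\vproj\,\sain_j$, because for every weighting function considered in the paper (softmax, average-hard, leftmost/rightmost unique-hard, uniform average, possibly masked) the weights $\alpha_{i,*}$ are nonnegative and sum to $1$; hence $\|\saout_i\| \le \max_j\|\vvec{j}\| \le \|\mathsf{sa}.\vproj\|\cdot\max_j\|\sain_j\|$, and the residual adds at most $\|\sain_i\|$, giving a bound affine in $\max_j\|\sain_j\|$. (ii) \emph{Feed-forward:} $\mathsf{ff}(\ffin) = \ffwtwo\,\ReLU(\ffwone\ffin + \ffbone) + \ffbtwo$ is a composition of affine maps with the $1$-Lipschitz map $\ReLU$, so $\|\mathsf{ff}(\ffin)\| \le \|\ffwtwo\|\,\|\ffwone\|\,\|\ffin\| + \|\ffwtwo\|\,\|\ffbone\| + \|\ffbtwo\|$, again affine in $\|\ffin\|$, and the residual adds $\|\ffin\|$. (iii) \emph{Layer normalization:} whenever $\sigma^2 > 0$ the normalized vector $(x - \mu\vec{1})/\sqrt{\sigma^2}$ has squared $L^2$ norm $\sum_i (x_i-\mu)^2/\sigma^2 = d$, which is \emph{independent of $x$}, so $\|\mathsf{LN}(x)\| \le \sqrt{d}\,\|\mathsf{LN}.\gamma\|_\infty + \|\mathsf{LN}.\beta\| =: X_{\mathsf{LN}}$, a constant not depending on the input at all.

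The main obstacle is the degenerate case $\sigma^2 = 0$ together with $\mathsf{LN}.\epsilon = 0$, where the defining formula divides by zero; this occurs exactly when the incoming vector is constant across its $d$ coordinates. One disposes of it by the usual convention that $\mathsf{LN}$ returns $\mathsf{LN}.\beta$ (or any fixed vector) in that case — still bounded — or, in constructions where the relevant slice of the residual stream is arranged to have zero mean but nonzero variance (as in \cref{sec:layernorm_amplify}), by observing the degenerate case never arises; either way the bound $X_{\mathsf{LN}}$ survives. A secondary point to verify is merely that each parameter matrix has finite operator norm, which holds because a transformer has finitely many parameters. Chaining the per-component estimates of the form $X_\ell \le c_\ell X_{\ell-1} + c'_\ell$ over the fixed depth $L$ then produces the desired uniform bound $X$, which depends on $L$, $P$, $\Sigma$, and the parameters of $\mathsf{tf}$, but crucially not on the input length $n$ or the position $i$.
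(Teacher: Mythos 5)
Your proof is correct; note, however, that the paper does not actually give a proof of this proposition --- it is stated by citation to \citet{hahn-2020-theoretical} and \citet{chiang+:icml2023} only, so there is no in-paper argument to compare against. What you have reproduced is the standard argument from those references: an induction over the fixed depth $L$ in which every sublayer is bounded-input bounded-output, with the decisive observation being that layer normalization with $\mathsf{LN}.\varepsilon = 0$, though no longer Lipschitz, still has output norm at most $\sqrt{d}\,\|\mathsf{LN}.\gamma\|_\infty + \|\mathsf{LN}.\beta\|$ regardless of its input. Your treatment of the degenerate $\sigma^2 = 0$ case by convention is a sensible completion; the attention step is also fine (the weights are nonnegative and sum to at most $1$, including the masked edge case where $\alpha_{1,1}=0$, so the output remains bounded by $\|\mathsf{sa}.\vproj\|\cdot\max_j\|\sain_j\|$). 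In short, this is a sound and appropriately economical filling-in of a gap the cookbook leaves to its citations.
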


\begin{proposition}[Error analysis of FFNNs]
If\/ $\|\hat{\vec{x}} - \vec{x}\| \le \delta$ and $\mathsf{ff}$ is a ReLU FFN%
, then there is a constant $K$ such that \[ \|\mathsf{ff}(\hat{\vec{x}}) - \mathsf{ff}(\hat{\vec{x}})\| \le 
K \delta.
\]
\end{proposition}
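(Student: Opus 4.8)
The plan is to show that $\mathsf{ff}$ is Lipschitz continuous and then take $K$ to be its Lipschitz constant; the claimed bound follows immediately from the hypothesis $\|\hat{\vec{x}} - \vec{x}\| \le \delta$. Recall from \cref{def:ffn} that $\mathsf{ff}(\vec{x}) = \ffwtwo\,\ReLU(\ffwone \vec{x} + \ffbone) + \ffbtwo$, so $\mathsf{ff}$ is the composition of four maps: the affine map $\vec{x} \mapsto \ffwone \vec{x} + \ffbone$, the coordinate-wise $\ReLU$, the linear map $\vec{h} \mapsto \ffwtwo \vec{h}$, and the translation by $\ffbtwo$.

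First I would record the Lipschitz constant of each piece. An affine map $\vec{x} \mapsto \mathbf{W}\vec{x} + \vec{b}$ is $\|\mathbf{W}\|$-Lipschitz by the defining property of the operator norm, $\|\mathbf{W}\vec{z}\| \le \|\mathbf{W}\|\,\|\vec{z}\|$ (one may equally use the $L_{\infty,1}$ norm suggested in the text); a pure translation is $1$-Lipschitz; and $\ReLU$ is $1$-Lipschitz coordinate-wise, since $|\max(0,a) - \max(0,b)| \le |a-b|$, which lifts to a non-expansive map in any $L_p$ vector norm. Second, I would invoke the elementary fact that the composition of an $L_f$-Lipschitz map with an $L_g$-Lipschitz map is $(L_g L_f)$-Lipschitz, applied across the chain, to conclude that $\mathsf{ff}$ is $K$-Lipschitz with $K = \|\ffwtwo\|\,\|\ffwone\|$ (the two bias translations each contributing a factor of $1$, the $\ReLU$ contributing a factor of $1$). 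Finally, $\|\mathsf{ff}(\hat{\vec{x}}) - \mathsf{ff}(\vec{x})\| \le K\,\|\hat{\vec{x}} - \vec{x}\| \le K\delta$.

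There is essentially no obstacle here; the only points requiring a little care are (i) being consistent about which vector norm is fixed and checking that $\ReLU$ is non-expansive in it — true for every $L_p$ norm since it is $1$-Lipschitz in each coordinate and acts coordinate-wise — and (ii) noting the evident typo in the statement: the right-hand difference should be $\mathsf{ff}(\hat{\vec{x}}) - \mathsf{ff}(\vec{x})$ rather than $\mathsf{ff}(\hat{\vec{x}}) - \mathsf{ff}(\hat{\vec{x}})$, which we silently correct. I would also remark that the argument yields an explicit constant, so the result is constructive, and that it extends verbatim to FFNs with $\GELU$ or any other Lipschitz activation, only replacing the factor $1$ by that activation's Lipschitz constant in the product defining $K$.
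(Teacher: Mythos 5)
Your argument is correct and matches the route the paper itself gestures at (in \cref{sec:layernorm} it observes that ``every operation in an FFN \ldots is itself Lipschitz continuous, their composition is as well''): decompose $\mathsf{ff}$ into affine maps and coordinate-wise $\ReLU$, bound each piece's Lipschitz constant, and multiply, giving $K = \|\ffwtwo\|\,\|\ffwone\|$. You also correctly flag the typo $\mathsf{ff}(\hat{\vec{x}}) - \mathsf{ff}(\hat{\vec{x}})$, which should read $\mathsf{ff}(\hat{\vec{x}}) - \mathsf{ff}(\vec{x})$.
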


\begin{proposition}[Error analysis of attention, \citealt{yang-etal-2025-simulating}]
If\/ $\|\vec{x}\| \le X$, $\|\hat{\vec{x}} - \vec{x}\| \le \delta \le 1$ and $\mathsf{sa}$ is a self-attention layer%
, then there is a constant $K$ %
such that for all $i$, \[ \|[\mathsf{sa}(\hat{\vec{x}})]\elt{i} - [\mathsf{sa}(\vec{x})]\elt{i}\| \le K \delta.\]
\end{proposition}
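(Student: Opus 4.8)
The plan is to propagate the input perturbation $\delta$ through the self-attention pipeline stage by stage — the query/key/value projections, the scores, the softmax, and the final weighted sum — collecting Lipschitz constants along the way and, crucially, making sure none of them depends on the sequence length $n$. I read $\norm{\vec{x}} \le X$ and $\norm{\hat{\vec{x}} - \vec{x}} \le \delta$ position-wise, that is, $\norm{\sain_j} \le X$ and $\norm{\hat{\sain}_j - \sain_j} \le \delta$ for every $j$; any other reasonable convention changes only the constants.

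First I would handle the projections. Since $\qproj, \kproj, \vproj$ are linear, writing $L = \max\{\norm{\qproj}, \norm{\kproj}, \norm{\vproj}\}$ gives simultaneously the magnitude bounds $\norm{\qvec{i}}, \norm{\hat{\qvec{i}}} \le L(X+1)$ (using $\delta \le 1$), and similarly for keys and values, together with the perturbation bounds $\norm{\hat{\qvec{i}} - \qvec{i}} \le L\delta$, and likewise for keys and values. Next, the scores $s_{i,j} = \qvec{i}^\top \kvec{j}/\sqrt{\dkey}$ are bilinear, so adding and subtracting $\hat{\qvec{i}}^\top \kvec{j}$ and applying the magnitude bounds yields $|\hat{s}_{i,j} - s_{i,j}| \le C_1 \delta$ with $C_1 = 2L^2(X+1)/\sqrt{\dkey}$. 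Masking causes no trouble: a future- or past-masked coordinate is $-\infty$ for both $\vec{x}$ and $\hat{\vec{x}}$, since masking depends only on positions, so we may restrict $j$ to the unmasked set everywhere below.

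The key step is the softmax, and the key fact is that it is Lipschitz with an $n$-independent constant: for score vectors $\vec{a}, \vec{b}$ of any common length, $\norm{\softmax(\vec{a}) - \softmax(\vec{b})}_1 \le 2\norm{\vec{a} - \vec{b}}_\infty$. I would prove this by integrating along the segment from $\vec{b}$ to $\vec{a}$: the softmax Jacobian has entries $J_{k\ell} = \sigma_k(\ind{k=\ell} - \sigma_\ell)$, so $(Jv)_k = \sigma_k\bigl(v_k - \sum_\ell \sigma_\ell v_\ell\bigr)$ and hence $\norm{Jv}_1 = \sum_k \sigma_k\bigl|v_k - \sum_\ell \sigma_\ell v_\ell\bigr| \le 2\norm{v}_\infty$, because $\sigma$ is a probability vector. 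Combined with the score bound, this gives $\norm{\hat{\alpha}_{i,*} - \alpha_{i,*}}_1 \le 2C_1\delta$.

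Finally I would bound the output via the decomposition
\[
\hat{\saout}_i - \saout_i = \sum_j \hat{\alpha}_{i,j}\bigl(\hat{\vvec{j}} - \vvec{j}\bigr) + \sum_j \bigl(\hat{\alpha}_{i,j} - \alpha_{i,j}\bigr)\vvec{j}.
\]
The first sum is a convex combination of vectors each of norm at most $L\delta$, so it has norm at most $L\delta$; the second has norm at most $\max_j \norm{\vvec{j}} \cdot \norm{\hat{\alpha}_{i,*} - \alpha_{i,*}}_1 \le L(X+1)\cdot 2C_1\delta$. Adding, $\norm{\hat{\saout}_i - \saout_i} \le \bigl(L + 2LC_1(X+1)\bigr)\delta =: K\delta$, with $K$ depending only on $X$, $\dkey$, and the projection norms, not on $n$. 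I expect the only part that needs genuine care to be the $n$-independence of the softmax Lipschitz bound; the rest is a routine chain of triangle inequalities.
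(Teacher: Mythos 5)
The paper states this proposition with a citation to \citet{yang-etal-2025-simulating} but does not reproduce the proof, so there is no in-paper argument to compare yours against. On its own merits, your proof is correct: the crucial $n$-independent step is the $\ell_\infty$-to-$\ell_1$ softmax Lipschitz bound $\norm{\softmax(\vec{a})-\softmax(\vec{b})}_1 \le 2\norm{\vec{a}-\vec{b}}_\infty$, which you justify correctly from the Jacobian formula $(Jv)_k = \sigma_k\bigl(v_k - \sum_\ell \sigma_\ell v_\ell\bigr)$ and the fact that the attention weights form a probability vector. The remaining steps---magnitude and perturbation bounds for queries, keys, and values via the operator norms, the bilinear add-and-subtract bound on scores (where you correctly observe that masking is input-independent, so you may restrict to unmasked positions), and the decomposition of the output difference into a convex-combination term and a weight-perturbation term---are standard and all go through. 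Your constant $K$ is not tight, but the proposition only requires existence, and your bookkeeping makes transparent that $K$ depends only on $X$, $\dkey$, and the projection norms, not on the sequence length~$n$.
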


\begin{proposition}[Error analysis of residual connections]
If\/ $\|\hat{\vec{x}} - \vec{x}\| \le \delta$, $f \colon \R^d \to \R^d$, and\/ 
$\|f(\hat{\vec{x}})-f(\vec{x})\| \le \epsilon$, then \[ \| f(\hat{\vec{x}})+\hat{\vec{x}} - (f(\vec{x}) + \vec{x})\| \le \epsilon+\delta. \]
\end{proposition}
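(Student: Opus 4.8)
The plan is to reduce the claim to a single application of the triangle inequality. First I would regroup the difference of the two residual outputs so that the $f$-contributions and the identity-contributions are collected separately:
\[
f(\hat{\vec{x}}) + \hat{\vec{x}} - \bigl(f(\vec{x}) + \vec{x}\bigr) = \bigl(f(\hat{\vec{x}}) - f(\vec{x})\bigr) + \bigl(\hat{\vec{x}} - \vec{x}\bigr).
\]
Then the triangle inequality for the norm $\|\cdot\|$ gives
\[
\bigl\| f(\hat{\vec{x}}) + \hat{\vec{x}} - \bigl(f(\vec{x}) + \vec{x}\bigr)\bigr\| \le \bigl\| f(\hat{\vec{x}}) - f(\vec{x})\bigr\| + \bigl\|\hat{\vec{x}} - \vec{x}\bigr\|,
\]
and substituting the two hypotheses $\|f(\hat{\vec{x}}) - f(\vec{x})\| \le \epsilon$ and $\|\hat{\vec{x}} - \vec{x}\| \le \delta$ yields the claimed bound $\epsilon + \delta$.

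There is essentially no obstacle here; the only point requiring any care is that the same norm $\|\cdot\|$ is used for both error terms and for the conclusion, which is automatic since the statement fixes a single norm throughout. It is worth emphasising that, unlike the FFN and attention error-analysis propositions, this one requires no structural hypothesis on $f$ at all (no Lipschitz constant, no boundedness): the residual connection simply adds the input perturbation $\delta$ on top of whatever error $\epsilon$ the sublayer $f$ already incurs. This additivity is precisely what one chains together when propagating error bounds through a full transformer layer, since each residual connection then contributes only an extra additive $\delta$ to the running bound.
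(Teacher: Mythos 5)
Your proof is correct, and it is the natural (essentially the only) argument: regroup the difference so the $f$-terms and the identity-terms are separated, apply the triangle inequality, and substitute the two hypotheses. The paper states this proposition without proof, presumably because it is exactly this immediate triangle-inequality observation; your write-up supplies what the paper leaves implicit, and your closing remark correctly identifies the key structural point—that no hypothesis on $f$ itself is needed—which is what makes this the easiest of the error-analysis propositions to chain.
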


\begin{proposition}[Error analysis of layer normalization]
If\/ $\|\vec{x}\| \le X$, $\|\hat{\vec{x}} - \vec{x}\| \le \delta \le 1$, and $\mathsf{LN}$ is a layer normalization with $\mathsf{LN}.\epsilon > 0$, then there is a constant $K$ such that \[ \|\mathsf{LN}(\hat{\vec{x}}) - \mathsf{LN}(\vec{x})\| \le K\delta.\]
\end{proposition}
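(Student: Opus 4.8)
The plan is to observe that a layer normalization with $\mathsf{LN}.\epsilon>0$ is in fact \emph{globally} Lipschitz continuous, so the estimate holds with a constant $K$ depending only on the fixed quantities $d$, $\mathsf{LN}.\gamma$, and $\mathsf{LN}.\epsilon$; the hypotheses $\|\vec{x}\|\le X$ and $\delta\le 1$ are not actually needed here and are kept only for parity with the adjacent propositions. I would use the Euclidean norm on vectors and the induced operator norm on matrices (any other choice changes $K$ by at most a dimension-dependent factor). First, split $\mathsf{LN}$ into elementary maps: letting $\mat{P}=\mat{I}-\tfrac1d\vec{1}\vec{1}^\top$ be the mean-centering projection, we have $\vec{x}-\mu\vec{1}=\mat{P}\vec{x}$ and $\sigma^2=\tfrac1d\|\mat{P}\vec{x}\|^2$, so
\[
  \mathsf{LN}(\vec{x}) = g(\mat{P}\vec{x})\odot\mathsf{LN}.\gamma + \mathsf{LN}.\beta,
  \qquad\text{where}\qquad
  g(\vec{u}) \defeq \frac{\vec{u}}{\sqrt{\tfrac1d\|\vec{u}\|^2+\mathsf{LN}.\epsilon}}.
\]
Since $\mat{P}$ is an orthogonal projection, $\|\mat{P}\|\le 1$, and $\vec{v}\mapsto\vec{v}\odot\mathsf{LN}.\gamma$ has operator norm $\|\mathsf{LN}.\gamma\|_\infty$, so it remains only to bound the Lipschitz constant of $g$.

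Next I would estimate that constant through the Jacobian of $g$, which is smooth on all of $\R^d$ precisely because $\mathsf{LN}.\epsilon>0$ keeps the denominator away from $0$. With $t=\|\vec{u}\|^2/d$,
\[
  \mathrm{D}g(\vec{u}) = (t+\mathsf{LN}.\epsilon)^{-1/2}\,\mat{I} - \tfrac1d\,(t+\mathsf{LN}.\epsilon)^{-3/2}\,\vec{u}\vec{u}^\top .
\]
The identity part has operator norm $(t+\mathsf{LN}.\epsilon)^{-1/2}\le(\mathsf{LN}.\epsilon)^{-1/2}$, and the rank-one part has operator norm $t(t+\mathsf{LN}.\epsilon)^{-3/2}\le(t+\mathsf{LN}.\epsilon)^{-1/2}\le(\mathsf{LN}.\epsilon)^{-1/2}$, the middle step using only $t\le t+\mathsf{LN}.\epsilon$. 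Hence $\|\mathrm{D}g(\vec{u})\|\le 2(\mathsf{LN}.\epsilon)^{-1/2}$ for every $\vec{u}$, so by the mean value inequality on the convex domain $\R^d$, $g$ is $2(\mathsf{LN}.\epsilon)^{-1/2}$-Lipschitz.

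Finally I would chain the bounds:
\[
  \|\mathsf{LN}(\hat{\vec{x}})-\mathsf{LN}(\vec{x})\|
  = \bigl\|\bigl(g(\mat{P}\hat{\vec{x}})-g(\mat{P}\vec{x})\bigr)\odot\mathsf{LN}.\gamma\bigr\|
  \le \|\mathsf{LN}.\gamma\|_\infty\cdot\frac{2}{\sqrt{\mathsf{LN}.\epsilon}}\cdot\|\mat{P}\|\cdot\|\hat{\vec{x}}-\vec{x}\|
  \le K\delta,
\]
with $K=2\|\mathsf{LN}.\gamma\|_\infty/\sqrt{\mathsf{LN}.\epsilon}$. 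The only step carrying any content is the Jacobian bound — concretely the elementary inequality $t(t+\mathsf{LN}.\epsilon)^{-3/2}\le(\mathsf{LN}.\epsilon)^{-1/2}$ that controls the rank-one term — and I expect that to be the (minor) crux; everything else is the routine composition of Lipschitz maps already used for the other propositions in this section. One could alternatively avoid differentiating $g$ by estimating $\|g(\vec{u})-g(\vec{u}')\|$ via a triangle-inequality split (adding and subtracting $\vec{u}/\sqrt{\tfrac1d\|\vec{u}'\|^2+\mathsf{LN}.\epsilon}$ and using $\|\vec{u}\|/\sqrt{\tfrac1d\|\vec{u}\|^2+\mathsf{LN}.\epsilon}\le\sqrt{d}$), but the Jacobian route is cleaner.
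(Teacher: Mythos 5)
The paper states this proposition without a proof (none of the error-analysis propositions in that subsection carry proofs), so there is no reference argument to compare against; judged on its own merits, your proof is correct. The decomposition $\mathsf{LN}(\vec{x})=g(\mat{P}\vec{x})\odot\mathsf{LN}.\gamma+\mathsf{LN}.\beta$ with $\mat{P}=\mat{I}-\tfrac1d\vec{1}\vec{1}^\top$ the mean-centering projector cleanly isolates the only nonlinear piece, the Jacobian of $g$ is computed correctly, and chaining $\|\mat{P}\|\le1$, $\mathrm{Lip}(g)\le 2/\sqrt{\mathsf{LN}.\epsilon}$, and $\|\mathrm{diag}(\mathsf{LN}.\gamma)\|=\|\mathsf{LN}.\gamma\|_\infty$ yields a valid $K$. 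You are also right that the hypotheses $\|\vec{x}\|\le X$ and $\delta\le1$ are never used: with $\mathsf{LN}.\epsilon>0$, $g$ is globally Lipschitz, so the estimate holds unconditionally.

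One small sharpening worth noting: $\mathrm{D}g(\vec{u})=(t+\mathsf{LN}.\epsilon)^{-1/2}\mat{I}-\tfrac1d(t+\mathsf{LN}.\epsilon)^{-3/2}\vec{u}\vec{u}^\top$ is symmetric, with eigenvalue $(t+\mathsf{LN}.\epsilon)^{-1/2}$ on the orthogonal complement of $\vec{u}$ and eigenvalue $\mathsf{LN}.\epsilon\,(t+\mathsf{LN}.\epsilon)^{-3/2}$ in the direction of $\vec{u}$; both are nonnegative and the former is always the larger. Hence $\|\mathrm{D}g(\vec{u})\|=(t+\mathsf{LN}.\epsilon)^{-1/2}\le(\mathsf{LN}.\epsilon)^{-1/2}$ exactly, the triangle-inequality factor of $2$ can be dropped, and $K=\|\mathsf{LN}.\gamma\|_\infty/\sqrt{\mathsf{LN}.\epsilon}$ suffices. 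This is purely cosmetic; your argument as written is sound.
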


\section{Assembly}
\label{sec:assembly}

\subsection{Residual stream}\label{sec:residual}

The layers of a transformer $\mathsf{tf}$ compute a sequence of sequences of vectors, 
\begin{center}
\begin{tabular}{lllll}
$(\tlhid^{(1)}_1, \ldots, \tlhid^{(1)}_n)$ &=& $\mathsf{tf}.\mathsf{tl}^{(1)}.\mathsf{sa}(\tlio^{(0)}_1, \ldots, \tlio^{(0)}_n)$ &+& $(\tlio^{(0)}_1, \ldots, \tlio^{(0)}_n)$ \\
$(\tlio^{(1)}_1, \ldots, \tlio^{(1)}_n)$ &= &$\mathsf{tf}.\mathsf{tl}^{(1)}.\mathsf{ff}(\tlhid^{(1)}_1, \ldots, \tlhid^{(1)}_n)$ &+&$ (\tlhid^{(1)}_1, \ldots, \tlhid^{(1)}_n) $\\
& \;\vdots & & & \\
$(\tlhid^{(L)}_1, \ldots, \tlhid^{(L)}_n)$ &=& $\mathsf{tf}.\mathsf{tl}^{(L)}.\mathsf{sa}(\tlio^{(L-1)}_1, \ldots, \tlio^{(L-1)}_n)$ &+& $(\tlio^{(L-1)}_1, \ldots, \tlio^{(L-1)}_n)$ \\
$(\tlio^{(L)}_1, \ldots, \tlio^{(L)}_n)$ &=& $\mathsf{tf}.\mathsf{tl}^{(L)}.\mathsf{ff}(\tlhid^{(L)}_1, \ldots, \tlhid^{(L)}_n)$ &+& $(\tlhid^{(L)}_1, \ldots, \tlhid^{(L)}_n)$\\
\end{tabular}
\end{center}

Because each layer doesn't replace its input but adds to it, this sequence of sequences of vectors is sometimes referred to as the ``residual stream'' \citep{elhage2021mathematical}.

In theoretical constructions of transformers, 
if there is no need to minimize the dimension $d$, it's typical for each layer only to add to components that have a zero value. Then a transformer resembles a straight-line program, where each layer sets one or more components, each component is set exactly once, and each layer can see the components set by all previous layers.

\subsection{Routing Lemma}\label{sec:routing}

The following lemma is extremely useful for passing information through a transformer, and constructions very frequently make use of it without mentioning it explicitly.

\begin{lemma}[Routing Lemma] \label{thm:routing}
If $\mathsf{ff} \colon \R^d \to \R^d$ is a FFN, and $\mat{L}, \mat{R} \colon \R^d \to \R^d$ are linear transformations, then $\mat{L} \circ \mathsf{ff}$ and $\mathsf{ff} \circ \mat{R}$ are also FFNs.

Similarly, let $\mathsf{sa} \colon (\R^d)^+ \to (\R^d)^+$ be a self-attention layer, and $\mat{L}, \mat{R} \colon \R^d \to \R^d$ be linear transformations. Then $\mat{L}$ and $\mat{R}$ induce  positionwise mappings $(\R^d)^+ \to (\R^d)^+$, and $\mat{L} \circ \mathsf{sa}$ and $\mathsf{sa} \circ \mat{R}$ are also self-attention layers.
\end{lemma}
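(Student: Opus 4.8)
The plan is to prove every clause by the same move: \emph{absorb} the extra linear map into the weight matrices that already appear in \cref{def:ffn} and \cref{def:attn}, so that the composite is literally an instance of the same architecture with different parameters. No new mechanism is needed beyond bookkeeping of matrix dimensions.

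For the feed-forward case, write $\mathsf{ff}(\vec{x}) = \ffwtwo\,\ReLU(\ffwone\vec{x} + \ffbone) + \ffbtwo$. For $\mathsf{ff}\circ\mat{R}$ I would substitute $\mat{R}\vec{x}$ for $\vec{x}$: since $\ffwone(\mat{R}\vec{x}) = (\ffwone\mat{R})\vec{x}$ and $\ffwone\mat{R} \in \R^{\dhid\times d}$, the FFN with $\ffwone' = \ffwone\mat{R}$ and $\ffbone, \ffwtwo, \ffbtwo$ unchanged computes $\mathsf{ff}(\mat{R}\vec{x})$. For $\mat{L}\circ\mathsf{ff}$ I would distribute $\mat{L}$ over the outer affine map: $\mat{L}(\mathsf{ff}(\vec{x})) = (\mat{L}\ffwtwo)\,\ReLU(\ffwone\vec{x} + \ffbone) + \mat{L}\ffbtwo$, so take $\ffwtwo' = \mat{L}\ffwtwo \in \R^{d\times\dhid}$, $\ffbtwo' = \mat{L}\ffbtwo$, and keep $\ffwone, \ffbone$.

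For the self-attention case, recall from \cref{def:attn} that the output is $\saout_i = \sum_j \alpha_{i,j}\vvec{j}$ with $\qvec{i} = \qproj\sain_i$, $\kvec{j} = \kproj\sain_j$, $\vvec{j} = \vproj\sain_j$, and $\alpha_{i,*} = \mathcal{S}(s_{i,*})$ a function of the attention scores $s_{i,j}$ only. For $\mathsf{sa}\circ\mat{R}$, feeding $\mat{R}\sain_i$ into the layer replaces $\qproj, \kproj, \vproj$ by $\qproj\mat{R}, \kproj\mat{R}, \vproj\mat{R}$ (still of the correct shapes), while the weighting function $\mathcal{S}$ and any future- or past-masking — which depend only on the indices $i$ and $j$ — are untouched; so the composite is again a self-attention layer. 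For $\mat{L}\circ\mathsf{sa}$, the key point is that $\mat{L}$ acts only on the value path: $\mat{L}\saout_i = \sum_j \alpha_{i,j}(\mat{L}\vproj)\sain_j$, and the weights $\alpha_{i,j}$ are computed from $\qproj$ and $\kproj$ alone, hence are unchanged; so taking $\vproj' = \mat{L}\vproj$ and keeping $\qproj, \kproj$, and $\mathcal{S}$ yields a self-attention layer.

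The computations are routine; the only points requiring care are (i) confirming that each modified matrix still has the dimensions demanded by the definitions, and (ii) observing, in the attention case, that post-composition with $\mat{L}$ leaves the attention \emph{weights} invariant — this is precisely why the claim holds, and is where it matters that $\mat{L}$ is applied after, not before, the weighting function. I do not anticipate a genuine obstacle beyond this bookkeeping.
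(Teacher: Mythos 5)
Your proposal is correct and matches the paper's proof in approach: both absorb $\mat{L}$ and $\mat{R}$ into the existing weight matrices, giving $\ffwone' = \ffwone\mat{R}$, $\ffwtwo' = \mat{L}\ffwtwo$, $\ffbtwo' = \mat{L}\ffbtwo$ for the FFN and $\qproj\mat{R}$, $\kproj\mat{R}$, $\mat{L}\vproj\mat{R}$ for self-attention, the paper simply presenting the combined $\mat{L}\circ\cdot\circ\mat{R}$ case while you treat each side separately. (Incidentally, your version corrects a small typo in the paper's proof, which writes $\mathsf{ff'}.\mat{W}_2 = \mat{L}(\mathsf{ff}.\mat{W}_1)$ where $\mathsf{ff}.\mat{W}_2$ is clearly intended.)
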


This means in particular that we can reorder, duplicate, or zero out the components of the hidden vectors of a transformer at will.
We will leave these transformations implicit. (In particular, our definition of FFN and self-attention requires that the input and output dimension ($d$) be equal, but we will often give definitions of FFNs and self-attentions where this is not the case, because the routing lemma can be used to make them equal.)

\begin{proof}
Given $\mathsf{ff}$ and matrices $\mat{L}$ and $\mat{R}$, we can construct a FFN $\mathsf{ff'} = \mat{L} \circ \mathsf{ff} \circ \mat{R}$:
\begin{align*}
\mathsf{ff'}.\mat{W}_1 &= (\mathsf{ff}.\mat{W}_1) \mat{R} & \mathsf{ff'}.\vec{b}_1 &= \mathsf{ff}.\vec{b}_1 \\
\mathsf{ff'}.\mat{W}_2 &= \mat{L} (\mathsf{ff}.\mat{W}_1) & \mathsf{ff'}.\vec{b}_2 &= \mat{L} (\mathsf{ff}.\vec{b}_2).
\end{align*}
Given $\mathsf{sa}$ and matrices $\mat{L}$ and $\mat{R}$, we can construct a self-attention layer $\mathsf{sa'} = \mat{L} \circ \mathsf{sa} \circ \mat{R}$:
\begin{align*}
\mathsf{sa'}.\qproj &= (\mathsf{sa}.\qproj) \mat{R} \\
\mathsf{sa'}.\kproj &= (\mathsf{sa}.\kproj) \mat{R} \\
\mathsf{sa'}.\vproj &= \mat{L} (\mathsf{sa}.\vproj) \mat{R}. \tag*{\qedhere}
\end{align*}
\end{proof}

\subsection{Composition operations}

\newcommand{\transformer}{\mathsf{tf}}
\newcommand{\transformerlayer}{\mathsf{tl}}
\newcommand{\lin}{\mathsf{lin}}

\begin{lemma}[Serial Composition] \label{thm:serial_comp}
Given two layers $\mathsf{tl}_1,\mathsf{tl}_2 \colon \R^d \to \R^d$, there is a layer $\mathsf{tl}_2 \circ \mathsf{tl}_1$ that computes $\mathsf{tl}_2(\mathsf{tl}_1(w))$.
\end{lemma}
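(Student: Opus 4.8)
The plan is to build the composite directly: \textbf{take the two given layers and run them one after the other}, which is exactly a two-layer (sub-)transformer in the sense of \cref{subsec:transformer}. Since a transformer layer is a length-preserving map on sequences, $\mathsf{tl}_1$ outputs a sequence in $(\R^d)^+$ and $\mathsf{tl}_2$ consumes a sequence in $(\R^d)^+$, so feeding the former into the latter is immediately well-defined with no dimension mismatch. The resulting object, by definition, maps $w$ to $\mathsf{tl}_1(w)$ and then to $\mathsf{tl}_2(\mathsf{tl}_1(w))$, which is the claim.

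First I would note that the transformer definition already lets each layer carry its own parameters --- in particular its own key/value dimension $\dkey$ and its own FFN hidden dimension $\dhid$ --- so stacking $\mathsf{tl}_1$ and $\mathsf{tl}_2$ requires no compatibility between their internal widths. Second, I would unwind the transformer-layer definition on the stacked object to read off $\mathsf{tl}_2(\mathsf{tl}_1(w))$. Third, if one wants the two pieces packaged with a single common internal width (e.g.\ to quote later length/precision bounds uniformly, or when this lemma is chained with \cref{thm:parallel_composition}), apply the routing lemma (\cref{thm:routing}): pad the query/key projections and the FFN weight matrices of whichever sublayer is smaller with zero rows and columns, which leaves every computed function unchanged. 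Iterating the argument gives serial composition of any finite number of layers.

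There is no real obstacle here; the only points deserving a word of care are bookkeeping. One is the word ``layer'' in the statement: the composite is a block of two transformer layers, not a single self-attention-plus-FFN unit --- and indeed a single such unit cannot in general reproduce two successive attention operations, so the loose usage should be acknowledged. The other is that the intermediate representation handed from $\mathsf{tl}_1$ to $\mathsf{tl}_2$ must still contain everything $\mathsf{tl}_2$ needs; this is automatic, since each transformer layer adds to its input through residual connections rather than overwriting it, so $\mathsf{tl}_1(w)$ already carries a copy of (a linear image of) $w$ alongside whatever $\mathsf{tl}_1$ computed. Beyond this, the proof is immediate.
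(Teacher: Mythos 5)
Your proof is correct and matches the paper's argument exactly: the paper's entire proof is ``simply stack $\mathsf{tl}_2$ on top of the layers of $\mathsf{tl}_1$,'' which is precisely your first step. The additional bookkeeping you supply (independent internal widths, residual stream carrying the input forward, optional padding via \cref{thm:routing}) is all reasonable elaboration but not required; your caveat that the ``composite layer'' is really a two-layer block rather than a single attention-plus-FFN unit is a fair reading of the lemma's slightly loose wording.
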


\begin{proof}
Simply stack $\mathsf{tl}_2$ on top of the layers of $\mathsf{tl}_1$. This can then be extended to sequences of multiple layers.
\end{proof}

\begin{lemma}[Parallel Composition] \label{thm:parallel_composition}
  Given transformers without layer normalization
  \begin{align*}
    \transformer_1 \colon \Sigma^* &\lpto (\R^{d_1})^* \\
    \transformer_2 \colon \Sigma^* &\lpto (\R^{d_2})^* \\
  \intertext{there is a transformer}
  \transformer_1 \oplus \transformer_2 \colon \Sigma^* &\lpto (\R^{d_1+d_2})^*
  \end{align*}
  such that for all strings $\str{w} = w_1 \cdots w_{n} \in \Sigma^*$,
  \begin{equation*}
  (\transformer_1 \oplus \transformer_2) (\str{w}) = \begin{bmatrix} \transformer_1(\str{w})\elt1 \\ \transformer_2(\str{w})\elt1 \end{bmatrix} \cdots \begin{bmatrix} \transformer_1(\str{w})\elt{n} \\ \transformer_2(\str{w})\elt{n} \end{bmatrix}. \yestag
  \end{equation*}
\end{lemma}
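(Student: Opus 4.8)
The plan is to run the two transformers side by side in disjoint coordinate blocks of a single residual stream: the first $d_1$ coordinates will carry the computation of $\transformer_1$ and the last $d_2$ coordinates that of $\transformer_2$. Write $L_1,L_2$ for the numbers of layers of $\transformer_1,\transformer_2$. One cannot simply ``stack the blocks'' layer by layer, because a transformer layer has a single self-attention sublayer, hence a single softmax normalization, whereas $\transformer_1$ and $\transformer_2$ generally induce different attention weights; so the $\ell$-th layers of the two networks cannot share one layer of $\transformer_1\oplus\transformer_2$. Instead the construction runs the two networks \emph{in sequence}, using $L_1+L_2$ layers, and relies on the residual connections---together with the hypothesis that there is no layer normalization, which is exactly what keeps the two blocks from being coupled---to hold one block inert while the other is processed.

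Concretely, first set $(\transformer_1\oplus\transformer_2).\mathsf{we}(x)=\bigl[\transformer_1.\mathsf{we}(x)\,;\,\transformer_2.\mathsf{we}(x)\bigr]$ and likewise for the position embedding, so the embedding layer produces at position $i$ the concatenation $\bigl[\transformer_1.\mathsf{we}(x_i)+\transformer_1.\mathsf{pe}_n(i)\,;\,\transformer_2.\mathsf{we}(x_i)+\transformer_2.\mathsf{pe}_n(i)\bigr]$. Next, for $\ell=1,\dots,L_1$ let the $\ell$-th layer of $\transformer_1\oplus\transformer_2$ be $\transformer_1.\mathsf{tl}^{(\ell)}$ ``lifted'' to block~1 only: by the Routing Lemma (\cref{thm:routing}), precompose both its self-attention and its feed-forward sublayers with the projection $\R^{d_1+d_2}\to\R^{d_1}$ onto block~1 and postcompose with the inclusion $\R^{d_1}\hookrightarrow\R^{d_1+d_2}$ into block~1. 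Because the resulting sublayers read only block~1 and write only to block~1, the residual connection carries block~2 through untouched while block~1 undergoes exactly the transformation $\transformer_1.\mathsf{tl}^{(\ell)}$ performs inside $\transformer_1$. Symmetrically, for $\ell=1,\dots,L_2$ let layer $L_1+\ell$ be $\transformer_2.\mathsf{tl}^{(\ell)}$ lifted to block~2 only; these layers leave block~1 untouched. Finally, since the codomains $(\R^{d_i})^*$ indicate that $\transformer_1,\transformer_2$ output their final residual streams, take the output function of $\transformer_1\oplus\transformer_2$ to be the identity; position-wise output maps, if one insists on them, can instead be applied block-wise.

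To finish, I would argue by induction on layers that after the first $L_1$ layers the first block holds $\transformer_1(\str w)\elt i$ while the second block still holds $\transformer_2$'s embedding of $\str w$, and after the remaining $L_2$ layers the first block is unchanged while the second holds $\transformer_2(\str w)\elt i$; the (identity) unembedding then yields the claimed concatenation. The only point needing real care is the one flagged above---that the subnetworks must be run sequentially rather than interleaved, and that the Routing-Lemma lift of a self-attention/feed-forward sublayer genuinely acts as the identity on the untouched block---and this is precisely where the absence of layer normalization enters. No separate treatment of $L_1\ne L_2$ is needed since the construction is already sequential; to get depth $\max(L_1,L_2)$ instead one would pad the shorter network with do-nothing layers (zero self-attention and zero feed-forward), but that is unnecessary here.
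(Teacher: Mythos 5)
Your proposal is correct and follows essentially the same route as the paper: widen each sublayer via the Routing Lemma so it reads from and writes to one coordinate block only, rely on the residual connection (and the absence of layer normalization) to carry the other block through untouched, and compose serially because a single-head self-attention sublayer cannot simultaneously implement two independent softmax-normalized attention patterns. The paper organizes the sequence slightly differently---it first pads the shorter network with identity layers so both have $L = \max(L_1,L_2)$ layers, then interleaves $f_1'^{(\ell)}, f_2'^{(\ell)}$, yielding $2L$ layers total, whereas you run all of $\transformer_1$ and then all of $\transformer_2$ for $L_1+L_2$ layers---but the two orderings compute the same thing and are equivalent in depth. One small caveat in your closing parenthetical: padding to equal depth does \emph{not} by itself get you down to depth $\max(L_1,L_2)$, for exactly the reason you identified earlier---two distinct attention patterns cannot be packed into one single-head layer. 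As the paper notes, reaching depth $\max(L_1,L_2)$ would require multi-head attention, so the padding step is a bookkeeping convenience rather than a depth reduction.
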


\begin{proof}
  This is a special case of \cref{thm:routing}.
  Let $d_1$ and $d_2$ be the width of $\transformer_1$ and $\transformer_2$, respectively, and let $d = d_1+d_2$.
  If one of $\transformer_1$ and $\transformer_2$ has fewer layers than the other, add trivial layers (layers that compute the identity function) until they have the same number of layers $L$.

  The new transformer has embedding layer
  \begin{align*}
    (\transformer_1\oplus\transformer_2).\mathsf{we}(\sigma) &= \begin{bmatrix} \transformer_1.\mathsf{we}(\sigma) \\ \transformer_2.\mathsf{we}(\sigma)
    \end{bmatrix} \\
    (\transformer_1\oplus\transformer_2).\mathsf{pe}_n(i) &= \begin{bmatrix} \transformer_1.\mathsf{pe}_n(i) \\ \transformer_2.\mathsf{pe}_n(i) \end{bmatrix}.
  \end{align*}

  For each layer $\ell \in [L]$, let $f_1 = \transformer_1.\transformerlayer_\ell$ and $f_2 = \transformer_2.\transformerlayer_\ell$.
  Widen $f_1$ into a layer $f_1'$ with width $d$ as follows.
  \begin{align*}
    f_1'.\mathsf{sa}.\qproj &= \begin{bmatrix} f_1.\mathsf{sa}.\qproj & \mathbf{0} \end{bmatrix} \\
    f_1'.\mathsf{sa}.\kproj &= \begin{bmatrix} f_1.\mathsf{sa}.\kproj & \mathbf{0} \end{bmatrix} \\
    f_1'.\mathsf{sa}.\vproj &= \begin{bmatrix} f_1.\mathsf{sa}.\vproj & \mathbf{0} \\ \mathbf{0} & \mathbf{0} \end{bmatrix} \\
    f_1'.\mathsf{ff}.\ffwone &= \begin{bmatrix} f_1.\mathsf{ff}.\ffwtwo & \mathbf{0} \\ \mathbf{0} & \mathbf{0} \end{bmatrix} &
    f_1'.\mathsf{ff}.\ffbone &= \begin{bmatrix} f_1.\mathsf{ff}.\ffbone \\ \mathbf{0} \end{bmatrix} \\
    f_1'.\mathsf{ff}.\ffwtwo &= \begin{bmatrix} f_1.\mathsf{ff}.\ffwtwo & \mathbf{0} \\ \mathbf{0} & \mathbf{0} \end{bmatrix} &
    f_1'.\mathsf{ff}.\ffbtwo &= \begin{bmatrix} f_1.\mathsf{ff}.\ffbtwo \\ \mathbf{0} \end{bmatrix}
  \end{align*}
  Similarly, widen $f_2$ into a layer $f_2'$ with width $d$, but using the bottom half of the activation vectors:
  \begin{align*}
    f_2'.\mathsf{sa}.\qproj &= \begin{bmatrix} \mathbf{0} & f_2.\mathsf{sa}.\qproj \end{bmatrix} \\
    f_2'.\mathsf{sa}.\kproj &= \begin{bmatrix} \mathbf{0} & f_2.\mathsf{sa}.\kproj \end{bmatrix} \\
    f_2'.\mathsf{sa}.\vproj &= \begin{bmatrix} \mathbf{0} & \mathbf{0} \\ \mathbf{0} & f_2.\mathsf{sa}.\vproj \end{bmatrix} \\
    f_2'.\mathsf{ff}.\ffwone &= \begin{bmatrix} \mathbf{0} & \mathbf{0} \\ \mathbf{0} & f_2.\mathsf{ff}.\ffwone \end{bmatrix} &
    f_2'.\mathsf{ff}.\ffbone &= \begin{bmatrix} \mathbf{0} \\ f_2.\mathsf{ff}.\ffbone \end{bmatrix} \\
    f_2'.\mathsf{ff}.\ffwtwo &= \begin{bmatrix} \mathbf{0} & \mathbf{0} \\ \mathbf{0} & f_2.\mathsf{ff}.\ffwtwo \end{bmatrix} &
    f_2'.\mathsf{ff}.\ffbtwo &= \begin{bmatrix} \mathbf{0} \\ f_2.\mathsf{ff}.\ffbtwo \end{bmatrix}
  \end{align*}
  Then, by \cref{thm:serial_comp}, stack $f_1'$ on top of $f_2'$, or the other way around. (If we had multi-head attention, we could have combined $f_1$ and $f_2$ into a single layer.) %
\end{proof}

\section{Putting It All Together: Example Constructions}
\label{sec:dyck1}

In this section, we demonstrate how our tools can be applied to construct a transformer for certain tasks.
In particular, we present the construction of induction heads, as well as a construction to recognize the Dyck language adapted from~\citet{bhattamishra2020ability,yao-2021-self-attention,yang2024masked}. We note that the constructions presented here require multi-layer Transformers. Communication complexity-based arguments show that, for sequences of length $N$, any single-layer Transformer whose size is $o(N)$ or independent of $N$ cannot compute induction heads~\citep{sanford2024one} or recognize Dyck languages~\citep{bhattamishra2024separations}.

\subsection{Induction Heads}

The term \emph{induction head} was coined by \citet{elhage2021mathematical} in order to describe a transformer which performs a basic pattern-recognition task: If the current symbol is $A$ and the previous occurrence of $A$ was followed by a $B$, then predict the next symbol as $B$. In other words, the induction head is formally specified as a function with the following type:

\[\mathsf{Induct}\colon\Sigma^+\to \Sigma^+\]

There are a few variants of the induction head, each of which can be solved in a slightly different way.

\subsubsection{Most-Recent Induction}

In this case, the given sequence may contain multiple previous instances of the symbol to perform induction with. Here, we take the right-most instance for the induction. 
If none exists, we predict the current symbol.
For example in the sequence, at the last position we will predict $C$, because the last symbol is $A$ and the symbol following the second-to last $A$ is $C$. The same applies at other positions. 

\[\mathsf{Induct_{rightmost}}(ACABDACDCA)=ACCBDBAADC\]

Then, the predicted symbol is $C$. This mechanism can be achieved using the following construction, using the input above as a running example.
We will represent each symbol with its one-hot encoding $ \vec{e}_{\sigma}$ for $\sigma\in \Sigma$.

\begin{itemize}
    \item First, we use a one-hot embedding in order to encode the sequence
    \[\mathsf{WE}(ACABDACDCA)=\vec{e}_{A},\vec{e}_{C},\vec{e}_{A},\vec{e}_{B},\vec{e}_{D},\vec{e}_{A},\vec{e}_{C},\vec{e}_{D},\vec{e}_{C},\vec{e}_{A}\]
    \item For every symbol, use the routing construction from \cref{thm:routing} and the $\mathsf{pred}$ construction from \cref{sec:predecessor} to retrieve the embedding of the predecessor symbol (returning $0$ when none exists). 

    \[\mathsf{Pred}(ACABDACDCA)=\vec{0},\vec{e}_{A},\vec{e}_{C},\vec{e}_{A},\vec{e}_{B},\vec{e}_{D},\vec{e}_{A},\vec{e}_{C},\vec{e}_{D},\vec{e}_{C}\]
    
    \item If the current symbol is $A$, we can perform a similar construction as the one-hot encoding lookup from \cref{sec:one-hot} in order to retrieve the right-most position whose predecessor is $A$. 
    In this case, we use $\kvec{i}=\mathsf{WE}(x_i)$, $\qvec{j}=\mathsf{Pred}(x_j)$, and $\vvec{j}=\mathsf{WE}(x_j)$.
    Attention will be maximized at all positions with the same symbol as $x_i$.
    Then, right-most unique hard attention (say, using  \cref{sec:tie-breaking}) can be used to maximize attention on the rightmost of these positions, and $\vproj$ retrieves the symbol at that position.

\[\mathsf{Retrieve}(ACABDACDCA)=\vec{e}_{A},\vec{e}_{C},\vec{e}_{C},\vec{e}_{B},\vec{e}_{D},\vec{e}_{B},\vec{e}_{A},\vec{e}_{A},\vec{e}_{D},\vec{e}_{C}\]

\end{itemize}

\subsubsection{Most-Frequent Induction}

In this case, the given sequence may contain multiple previous instances of the symbol to perform induction with. Here, we take the most frequently occurring induction as the ultimate induction. For example in the sequence, we perform induction at the last position with the symbol $A$, which occurs three times previously.
Here, the predicted symbol is $C$, because $AC$ occurs twice while $AB$ only occurs once. 
When two symbols have the same probability, we can break ties arbitrarily since there are finitely many symbols (here, we can use the alphabetic ordering to break ties).

\[\mathsf{Induct_{frequent}}(ACABDACDCA)=ACCBDBAAAC\]

This mechanism can be achieved using the following construction

\begin{itemize}
    \item First, we use a one-hot embedding in order to encode the sequence
    \[\mathsf{WE}(ACABDACDCA)=\vec{e}_{A},\vec{e}_{C},\vec{e}_{A},\vec{e}_{B},\vec{e}_{D},\vec{e}_{A},\vec{e}_{C},\vec{e}_{D},\vec{e}_{C},\vec{e}_{A}\]
    \item For every symbol, use the routing construction from \cref{thm:routing} and the $\mathsf{pred}$ construction from \cref{sec:predecessor} to retrieve the embedding of the predecessor symbol (returning $0$ when none exists). 
    This allows us to encode the bigram $x_{i-1}x_i$ at each position.

    \[\mathsf{Bigram}(ACABDACDCA)=\begin{bmatrix}
        \vec{e}_A\\
        \vec{0}
    \end{bmatrix},
    \begin{bmatrix}
        \vec{e}_C\\
        \vec{e}_A
    \end{bmatrix},
    \begin{bmatrix}
        \vec{e}_A\\
        \vec{e}_C
    \end{bmatrix},
    \begin{bmatrix}
        \vec{e}_B\\
        \vec{e}_A
    \end{bmatrix},
    \begin{bmatrix}
        \vec{e}_D\\
        \vec{e}_B
    \end{bmatrix},
    \begin{bmatrix}
        \vec{e}_A\\
        \vec{e}_D
    \end{bmatrix},
    \begin{bmatrix}
        \vec{e}_C\\
        \vec{e}_A
    \end{bmatrix},
    \begin{bmatrix}
        \vec{e}_D\\
        \vec{e}_C
    \end{bmatrix},
    \begin{bmatrix}
        \vec{e}_C\\
        \vec{e}_D
    \end{bmatrix},
    \begin{bmatrix}
        \vec{e}_A\\
        \vec{e}_C
    \end{bmatrix}\]
    
    \item The construction in \cref{sec:uniform-attention} can be used to count the occurrences of each bigram $\sigma_1\sigma_2$ up to each position, using $\qproj=\begin{bmatrix}
        \vec{e}_{\sigma_1}\\
        \vec{e}_{\sigma_2}\\
    \end{bmatrix}$, $\kproj=\mathsf{Bigram}(x_j)$, and $\vproj=1$.

    \[\mathsf{BigramCount}_{AC}(ACABDACDCA)=0,1,1,1,1,1,2,2,2,2\]
    \item Using a comparison as described in \cref{sec:comparison}, the symbol that most frequently follows an $A$ can be detected, and used as the prediction.
    At position $i$, we enter a case for the symbol $w_i=\sigma$. 
    In this case, we look at every count $\mathsf{BigramCount}_{\sigma\sigma'}$, and check for each $\sigma'$ if $\bigwedge_{\sigma''\neq\sigma'} \mathsf{BigramCount}_{\sigma\sigma'}(w)_i\geq \mathsf{BigramCount}_{\sigma\sigma''}(w)_i$. 
    We return the alphabetically first $\sigma'$ for which this is true.

    \[\mathsf{Retrieve}(ACABDACDCA)=\vec{e}_{A},\vec{e}_{C},\vec{e}_{C},\vec{e}_{B},\vec{e}_{D},\vec{e}_{B},\vec{e}_{A},\vec{e}_{A},\vec{e}_{A},\vec{e}_{C}\]

\end{itemize}

\subsection{The Dyck Languages}

The Dyck languages are important because they exemplify \textit{hierarchical structure}, a key feature of both natural and formal languages.
Recognizing them requires tracking long-distance dependencies and nested relationships, making them an excellent case study for a model's capacity for fundamental computational primitives.
In~\Cref{sec:dyck-1} we present a construction to recognize \dyckk{1}, the prototypical Dyck language.
Thr prototypical Dyck language, \dyckk{1}, consists of well-nested strings from the alphabet \(\alphabet = \{\dyckO, \dyckC\}\) and is defined by two conditions:
\begin{itemize}
    \item \textsc{Property 1} (Prefix Condition): In every prefix of the string, the count of open brackets must be greater than or equal to the count of close brackets.
    \item \textsc{Property 2} (Balance Condition): The total count of open and close brackets in the entire string must be equal.
\end{itemize}
For example, the string \texttt{()(())} is in \dyckk{1}, while \texttt{())(()} is not.
Formally, \dyckk{1} is the language generated by the grammar:
\begin{equation*}
    S \to \varepsilon \mid \dyckO S \dyckC S
\end{equation*}
The decision problem is to determine if an input string \(\str{w} \in \{\dyckO,\dyckC\}^* \) is a member of \dyckk{1}.
We present a transformer construction to recognize \dyckk{1} in~\Cref{sec:dyck-1}.
In sections~\Cref{sec:dyck-1-2,sec:dyck-k-D} we introduce \textit{depth-bounded} variants of the Dyck language and present two additional transformer constructions.

\newcommand{\nohl}[1]{#1}
\newcommand{\hl}[1]{\underline{#1}}
\newcommand{\lsym}{\texttt{(}}
\newcommand{\rsym}{\texttt{)}}

\subsubsection{A Transformer Construction for Dyck-1}
\label{sec:dyck-1}

We now present a recipe to construct a transformer that recognizes \dyckk{1}, based on work by \citet{bhattamishra2020ability}.
This construction requires the following ingredients: average-hard attention, no layer normalization, and (non-strict) future masking.
The construction presented requires two layers, where we assume the use of residual connections for both the self-attention and feedforward components.
We note that in practice, transformers learn to simulate this algorithm when trained from data~\citep{bhattamishra2020ability}.

\begin{figure}[t]
\centering

\begin{minipage}{0.33\linewidth}
    \centering
    \begin{tikzpicture}
    \matrix [matrix of nodes,every node/.style={font={\tt}}] (m) { ( & ) & ( & ( & ) & ) \\ };
    \draw [decorate,decoration={brace,amplitude=5pt,raise=4ex}]
  (m-1-1.west) -- (m-1-4.east) node[midway,yshift=3em]{$O_4 = 3, C_4=1, {\color{ForestGreen}O_4\ge C_4 \quad \text{\faicon{check}}}$};
    \draw [decorate,decoration={brace,amplitude=5pt,mirror,raise=4ex}]
  (m-1-1.west) -- (m-1-6.east) node[midway,yshift=-3em]{$O_6 = 3, C_6=3, {\color{ForestGreen}O_6=C_6\quad\text{\faicon{check}}}$};
    \end{tikzpicture}
\end{minipage}%
\begin{minipage}{0.33\linewidth}
    \centering
    \begin{tikzpicture}
    \matrix [matrix of nodes,every node/.style={font={\tt}}] (m) { ( & ) & ) & ( & ( & ) \\ };
    \draw [decorate,decoration={brace,amplitude=5pt,raise=4ex}]
  (m-1-1.west) -- (m-1-3.east) node[midway,yshift=3em]{$O_3 = 1, C_3=2, {\color{red}O_3< C_3 \quad \text{\faicon{close}}}$};
    \draw [decorate,decoration={brace,amplitude=5pt,mirror,raise=4ex}]
  (m-1-1.west) -- (m-1-6.east) node[midway,yshift=-3em]{$O_6 = 3, C_6=3, {\color{ForestGreen}O_6=C_6\quad\text{\faicon{check}}}$};
    \end{tikzpicture}
\end{minipage}%
\begin{minipage}{0.33\linewidth}
    \centering
    \begin{tikzpicture}
    \matrix [matrix of nodes,every node/.style={font={\tt}}] (m) { ( & ) & ( & ( & ( & ) \\ };
    \draw [decorate,decoration={brace,amplitude=5pt,raise=4ex}]
  (m-1-1.west) -- (m-1-3.east) node[midway,yshift=3em]{$O_3 = 2, C_3=1, {\color{ForestGreen}O_3\ge C_3 \quad \text{\faicon{check}}}$};
    \draw [decorate,decoration={brace,amplitude=5pt,mirror,raise=4ex}]
  (m-1-1.west) -- (m-1-6.east) node[midway,yshift=-3em]{$O_6 = 4, C_6=2, {\color{red}O_6 \ne C_6\quad\text{\faicon{close}}}$};
    \end{tikzpicture}
\end{minipage}

\caption{Prefix-sum checks for membership in the Dyck language.
Each subpanel plots the running counts \(O_k\) (opens) and \(C_k\) (closes) at each position \(k\) of a candidate string of length \(N = 6\).
(Left) Valid Dyck-1 string: \(O_k \geq C_k\) for all \(k < N\) and \(O_N = C_N\), satisfying non-negativity and balanced counts, respectively.
(Center) Prefix violation: Although \(O_N = C_N\), at position \(k = 3\), we have \(O_3 < C_3\), violating non-negativity.
(Right) Here \(O_k \geq C_k\) for all \(k\), but \(O_N \neq C_N\), so the total counts are unbalanced.
}
\label{fig:dyck_1_examples}
\end{figure}
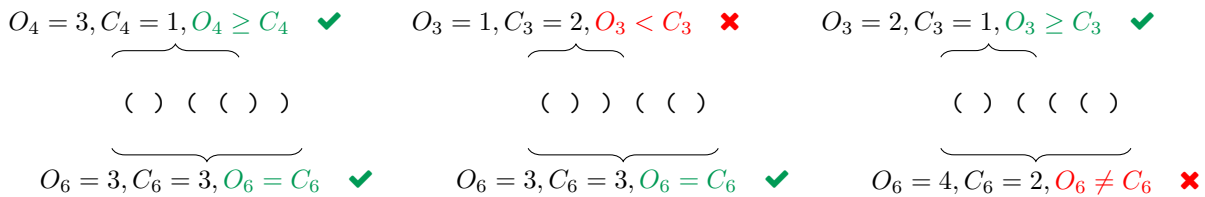

\paragraph{Intuition} %
A simple algorithm for recognizing \dyckk{1} provides intuition for the transformer construction: we make a single left-to-right pass and maintain a running count of the number of open minus closed parentheses.
As illustrated in~\cref{fig:dyck_1_examples}, this count must be non-negative at each position, and be exactly zero at the final position of the string.
The main idea is to use future-masked uniform attention (i.e., future-masked average hard attention) to compute this value at each position.

\paragraph{Overview}
For an input string $w$ of length $n$ and a position $0 \le i < n$, let $O_i$  denote the number of open parentheses (`$\dyckO$') appearing in the prefix $w_{:i}$.
Similarly, let $C_i$ denote the number of close parentheses (`$\dyckC$') appearing in $w_{:i}$.
We use a two-layer transformer to implement recognition of \dyckk{1}:
The first layer's attention head computes the running balance \(B_i = O_i - C_i\) and its feed-forward network calculates a prefix error signal \(E_i\); then, the second layer's attention head aggregates the \(E_i\) to compute the total error \(t_n\).
Finally, we inspect the output at the last position (index \(n\)) to check whether the string belongs in \dyckk{1}.

\paragraph{Step 1: Running balance computation}
We first convert the symbolic inputs into a numerical format by embedding the `$\dyckO$` and `$\dyckC$` as follows:
\begin{equation}
    x_i = \begin{bmatrix} o_i \\ 0 \\ 0 \\ 0 \end{bmatrix},
    \quad
    o_i = \begin{dcases}
        +1 &\text{if position \(i\) is \(\dyckO\)}, \\
        -1 &\text{if position \(i\) is \(\dyckC\),}
    \end{dcases}
\end{equation}
yielding a sequence of embedding vectors \(\mbf{x} = (x_1, \ldots, x_n)\).
The core of the algorithm is to compute the running balance of parentheses at every position, which we do with future-masked uniform attention.
In particular, we use the (\(\textsf{Avg}_{\leftarrow}\)) construction from~\cref{sec:uniform-attention} with a first-coordinate projection on \(\mbf{x}\) to compute the cumulative average:
\begin{equation} \label{eqn:dyck1_B_write}
(\textsf{Avg}_{\leftarrow}(\mbf{o}))_i = \frac{1}{i}\sum_{j=1}^{i} o_j = \frac{O_i - C_i}{i} = \frac{B_i}{i},
\quad \text{where } \mbf{o} = (o_1, o_2, \ldots, o_n).
\end{equation}
The running balance \(B_i / i\) is then written into a workspace dimension (see~\cref{eqn:dyck1-states}).
Note that scaling the balance \(B_i\) by a positive scalar \(1/i\) does not change the sign of the running balance, so the necessary information to check the Dyck properties is preserved.

\paragraph{Step 2: Computing balance violations}
Having used the first self-attention layer to compute a running balance at each position, we next aggregate an error signal into the final position, such that the final \dyckk{1}-acceptance check needs to only inspect the \(n\)-th position output.
To do this, we use an FFN to compute \(E_i = \text{ReLU}(-B_i/i)\), where note that \(E_i > 0\) iff the non-negativity condition of the running balance is violated.
This FNN may be achieved by taking the \(\ffwone\) weight (see~\cref{def:ffn}) to be a negative projection, and writing it to a new workspace dimension (see~\cref{eqn:dyck1-states}).

\paragraph{Step 3: Aggregating to the last position}
Finally, we use the second layer's self-attention to aggregate the error signal as follows:

\begin{equation}
    t_i = (\textsf{Avg}_{\leftarrow}(\vec{E}))_i = \frac{1}{i}\sum_{j=1}^{i} E_j,
    \quad \text{where } \mbf{E} = (E_1, E_2, \ldots, E_n),
\end{equation}
where note that at the final position, \(t_n > 0\) iff the running balance has been violated at any position, and that \(t_n = 0\) otherwise.
Altogether, this construction performs the following operations at each position \(i\)
\begin{equation} \label{eqn:dyck1-states}
    \begin{bmatrix} o_i \\ 0 \\ 0 \\ 0 \end{bmatrix}
    \xrightarrow{\text{Compute Balance}}
    \begin{bmatrix} o_i \\ B_i / i \\ 0 \\ 0 \end{bmatrix}
    \xrightarrow{\text{Negative \(B_i\) Error?}}
    \begin{bmatrix} o_i \\ B_i / i \\ E_i \\ 0 \end{bmatrix}
    \xrightarrow{\text{Sum of Errors}}
    \begin{bmatrix} o_i \\ B_i / i \\ E_i \\ t_i \end{bmatrix}
\end{equation}

The second layer's accumulation reduces our work to checking the output vector at the final position \(n\).
In particular, to check \textsc{Property 1}, it suffices to check that \(t_n = 0\), tracks the prefix violations.
To check \textsc{Property 2}, it suffices to check that \(B_n/n = 0\) (i.e., the final balance is zero).
To perform these checks, however, we must decide whether the construction parameters should depend on the input length \(n\) (see~\cref{sec:uniformity}).
Specifically, this affects whether the transform itself can output a 0/1 binary value to indicate acceptance, or whether this check must be externally performed.

\paragraph{Uniform implementation}
In a uniform implementation, the transformer's parameters are fixed and do not depend on the input length $n$.
This approach is often preferred in theoretical analyses as it reflects a single model that can handle inputs of any length.
Since a uniform model cannot use length-dependent comparison functions from \cref{sec:comparison}, it cannot produce a single 0/1 output.
Instead, the acceptance condition is defined by an external check on the final output vector, in the manner described above.

\paragraph{Nonuniform implementation}
However, if one wishes to coerce the acceptance decision into a single 0/1 value (at a particular position in the transformer's output) one must perform a nonuniform construction. To see this, consider the FFNs $\mathsf{GTZero}_\epsilon$ (which checks nonnegativity) and $\mathsf{EqZero}_\epsilon$ (which checks equality with zero) described in \cref{sec:comparison}. These FFNs are parameterized by $\epsilon$, the magnitude needed for a value to be considered different from zero. In our setting, using $\mathsf{GTZero}_{1/n^2}$ and $\mathsf{EqZero}_{1/n}$ where $n$ is the length of the input sequence would suffice.
This is because the smallest nonzero magnitude of the (scaled) running balance $B_i$ is $1 / i$, and $i$ is upper-bounded by the input sequence length $n$. Since $t_n$ is computed by using $\mathsf{Avg}_{\leftarrow}$ to average the values of $B_i$, the smallest nonzero magnitude of $t_n$ is $1/n^2$. 

One could add an additional transformer layer that uses an FFN (built from the components $\mathsf{GTZero}_\epsilon$, $\mathsf{EqZero}_\epsilon$ and logical AND) to render a 0/1 decision, however this would require fixing an upper bound on the maximum length of the input sequence that the transformer can process.

\subsubsection{A Transformer Construction for Dyck-1-2}\label{sec:dyck-1-2}

We will now present a construction that recognizes a \textit{depth-bounded} Dyck language. In particular, we will focus on \dyckkd{1}{2}, which is the Dyck language formed with one kind of bracket in which the maximum nesting depth is 2. The maximum nesting depth of a string being $D$ means that the total number of unclosed brackets in any prefix is at most $D$.
This construction for \dyckkd{1}{2} appears in~\citet{yang2024masked} and is based on~\citet{yao-2021-self-attention}.
This construction requires the following ingredients: (unique or averaging) hard attention, no layer normalization, strict future masking, strict past masking, and $i/n$ positional encodings.

\paragraph{Intuition}
Consider the input string $\lsym \lsym \rsym \rsym \lsym \lsym \rsym \lsym \rsym \rsym$, which is a member of \dyckkd{1}{2} (and thus should be accepted).
We will check membership in two steps.
First, we will look at pairs of adjacent brackets and mark all the matching pairs. In our example string, we would mark ``$\nohl{\lsym} {\color{ForestGreen} \hl{\lsym \rsym}} \nohl{\rsym \lsym} {\color{ForestGreen} \hl{\lsym \rsym} \hl{\lsym \rsym}} \nohl{\rsym}$''. This suffices to check for depth-one matches.

In the second step, we filter out all the brackets we marked in the first step and then mark all the matching adjacent pairs in this filtered string. This checks for depth-two matches.
To continue our example, filtering out all the brackets we marked in step 1 leaves us with the string ``$\nohl{\lsym}  \nohl{\rsym \lsym} \nohl{\rsym}$.'' In this case, each bracket matches an adjacent bracket in the filtered string, so we would mark all the remaining brackets.

In the final step, we accept the string iff all brackets have been marked as matched after the second step.

\paragraph{Overview}
Our description of this construction will pull together many ingredients from earlier sections of the cookbook.
The main idea of this construction is to use multiple transformer layers to perform steps 1 and 2 in sequence.
Importantly, we need to keep track of which brackets have been marked as ``matched'' during steps 1 and 2.
We do this by maintaining a ``active bit'' within the intermediate representation at each position.
Concretely, we initialize our embedding vector \(x_i\) at each position \(i\) as follows:
\begin{equation}
    \label{eqn:dyck_parens_encoding}
    x_i = 
    \begin{bmatrix}
        o_i \\ 1 \\ 0 \\ 0 \\ i/n
    \end{bmatrix},
    \quad o_i = \begin{dcases}
        +1 &\text{if position \(i\) is \(\lsym\)},\\
        -1 &\text{if position \(i\) is \(\rsym\).}
    \end{dcases}
\end{equation}
As before, let \(o_i\) denote whether position \(i\) is open (\(+1\)) or closed (\(-1\)).
The second entry is the ``active bit'', where \(1\) denotes that the current position is not yet matched, and which we later set to \(0\) to denote a successful match.
The third and fourth entries are initialized to zero and serve as scratch space on which to store intermediate computations.
Finally, we use \(i/n\) positional encodings.

\paragraph{Step 1: Identifying depth-1 matches}
We first use the predecessor construction in~\cref{sec:predecessor} with strict future masking, as well as the successor construction (a slight modification of predecessor), to fetch the bracket types immediately to the left and right of the current position.
Based on this, we then use an FFN to check for depth-1 matches and update the ``active-bit'' as follows:

\begin{equation}
    \label{eqn:dyck_mark_matched}
    \begin{bmatrix}
        o_i \\ 1 \\ 0 \\ 0 \\ i/n
    \end{bmatrix}
    \xrightarrow{A_{\mrm{left}}, A_{\mrm{right}}}
    \begin{bmatrix}
        o_i \\ 1 \\ o_{i-1} \\ o_{i+1} \\ i/n
    \end{bmatrix}
    \xrightarrow{\text{FFN Mark Matched}}
    \begin{bmatrix}
        o_i \\ a_i \\ 0 \\ 0 \\ i/n
    \end{bmatrix}
\end{equation}
A value of  \(a_i = 1\) indicates that no depth-1 match has occurred, such as when \(o_{i-1} o_i o_{i+1}\) corresponds to the bracket sequence \(\rsym\lsym\lsym\).
In particular, let:
\begin{equation}
    a_i = \begin{dcases}
        0 &\text{if \(o_i = +1\) and \(o_{i+1} = -1\)}, \\
        0 &\text{if \(o_i = -1\) and \(o_{i-1} = +1\)}, \\
        1 &\text{otherwise},
    \end{dcases}
\end{equation}
which may be implemented as conditionals, e.g.~\cref{sec:ffnn_conditional}.
As a corner case, let \(o_{-1} = o_{n+1} = 0\), or some other distinguished value, which the FFN's conditional implementation can handle.

\paragraph{Step 2: Identifying depth-2 matches}
The depth-2 match is similar to above, except at each position we search for the nearest left and right components that are not yet matched (i.e., whose active bit is still \(1\)).
This may be done by by running \(\UHAT\) on the following sequence:
\begin{equation}
    \UHAT\parens*{
        \frac{1}{n} - (a_1 - 1),\,
        \frac{2}{n} - (a_2 - 1),\,
        \ldots,\,
        \frac{n}{n} - (a_n - 1)}_i
    \quad \text{for } i = 1, \ldots, n,
\end{equation}
which may be achieved by applying a linear transform to the embedding state of~\cref{eqn:dyck_mark_matched}, such as via the self-attention's \(W^{(Q)}\) and \(W^{(K)}\) matrices assuming that a bias term is present.
We then proceed analogously to step 1, where let:

\begin{equation}
    \label{eqn:dyck_select_nearest}
    \begin{bmatrix}
        o_i \\ a_i \\ 0 \\ 0 \\ i/n
    \end{bmatrix}
    \xrightarrow{\text{Select Nearest Active}}
    \begin{bmatrix}
        o_i \\ a_i \\ l_i \\ r_i \\ i/n
    \end{bmatrix}
    \xrightarrow{\text{Check Match}}
    \begin{bmatrix}
        o_i \\ a_i ' \\ 0 \\ 0 \\ i/n
    \end{bmatrix}
\end{equation}
where \(l_i, r_i \in \{\pm 1\}\) correspond to the left and right tokens not yet matched.
Likewise, let \(a_i ' = 1\) if position \(i\) remains is not part of any depth-1 or depth-2 matches, and let \(a_i ' = 0\) otherwise if it is matched.
As above, these conditional checks and updates can be implemented via an FNN.

\paragraph{Step 3: Check for unmatched brackets}
Finally, it remains to check that no positions remain unmatched:
we accept the string iff at all positions we have \(a_1' = \cdots = a_n ' = 0\).
This is similar to the zero-check in our earlier \dyckk{1} construction, and similar discussions on uniformity vs. non-uniformity apply.

\subsubsection{Generalization to Dyck-k-D}
\label{sec:dyck-k-D}
The \dyckk{k} language consists of well-balanced strings over an alphabet of \(k\) parenthesis pairs \(\lsym_i, \rsym_i\), for \(i \in [k]\), generated by the context-free grammar:
\begin{equation*}
    S \coloneqq \varepsilon \,|\, \lsym_i S \rsym_i S ,
    \quad \text{for \(i \in [k]\)}.
\end{equation*}
The \dyckkd{k}{D} language is the subset of \dyckk{k} where the maximum nesting depth (the number of unclosed open parentheses at any point in the string) does not exceed \(D\).

We now generalize the construction of~\cref{sec:dyck-1-2} to \dyckkd{k}{D}, where the algorithmic idea is to find and cancel matching parentheses iteratively for depths \(1, \ldots, D\).
To do this, we first expand the parentheses encoding from~\cref{eqn:dyck_parens_encoding} as:
\begin{equation*}
    o_i \in \{-k, \ldots, -1, +1, \ldots, +k\},
\end{equation*}
allowing us to encode both the openness (\(\pm\)) and type (\(1, \ldots, k\)) of the parenthesis at position \(i\).
Then, we repeat the select-nearest-active and check-match of~\cref{eqn:dyck_select_nearest} \(D\) times, where we augment the feedforward component of~\cref{eqn:dyck_mark_matched} to handle matching for \(k\) different parentheses pairs.
Finally, the acceptance criterion is the same as Step 3 above, which involves checking whether any positions remain active (unmatched).
As a final remark, we note that generalizations to \dyckk{k} of unbounded depth were discussed by \citet[Sec~B.4]{yao-2021-self-attention} and \citet[Sec~D]{yang-etal-2025-simulating}.

\section{Discussion}

This cookbook presents an overview of what transformers can compute and how they may compute.
The recipes presented herein give a \textit{mise en place} of ingredients sampled from the literature.
Theoretically, they are a useful reference for investigations in transformer expressivity and learnability.
Practically, they are a starting point for experiment design in areas such as mechanistic interpretability and architecture design.
However, our curation is by no means exhaustive: new ideas and techniques are under active development.
Indeed, we eagerly await what new flavors the reader will discover!

\section{Acknowledgments}

We would like to thank the following chefs for bringing new flavors to our kitchen:
Lena Strobl,
Yuval Pinter,
Dana Angluin,
Jonathan Rawski,
Gail Weiss,
Brian DuSell,
Marco Cognetta,
Gabriel Faria,
Travis Bartley,
Nithila Sivapunniyam,
Yingshan Chang,
Akos Kadar,
Shahaf Bassan,
and other members of the FLaNN Discord Server.

\bibliography{main}
\bibliographystyle{tmlr}

\end{document}